%% file: main.tex
\documentclass[10pt]{article}
\usepackage[accepted]{tmlr}

\input{math_commands.tex}

\usepackage{microtype}
\usepackage{amsmath,amssymb,amsfonts,amsthm,mathtools}
\usepackage{bm}
\usepackage{graphicx}
\usepackage{subcaption}
\usepackage{booktabs}
\usepackage{multirow}
\usepackage{diagbox}
\usepackage{array}
\usepackage{enumitem}
\usepackage{algorithm}
\usepackage{algpseudocode}
\usepackage[dvipsnames]{xcolor}
\newcommand{\randdelta}[2]{\ensuremath{#1_{\scriptscriptstyle\pm#2}}}

\usepackage{hyperref}
\usepackage{url}
\usepackage[nameinlink,noabbrev,capitalize]{cleveref}
\usepackage{aliascnt}
\crefname{theorem}{Theorem}{Theorems}
\Crefname{theorem}{Theorem}{Theorems}
\crefname{proposition}{Proposition}{Propositions}
\Crefname{proposition}{Proposition}{Propositions}
\crefname{corollary}{Corollary}{Corollaries}
\Crefname{corollary}{Corollary}{Corollaries}
\crefname{lemma}{Lemma}{Lemmas}
\Crefname{lemma}{Lemma}{Lemmas}
\crefname{definition}{Definition}{Definitions}
\Crefname{definition}{Definition}{Definitions}
\crefname{remark}{Remark}{Remarks}
\Crefname{remark}{Remark}{Remarks}
\crefname{assumption}{Assumption}{Assumptions}
\Crefname{assumption}{Assumption}{Assumptions}

\newtheorem{theorem}{Theorem}

\newaliascnt{proposition}{theorem}

\aliascntresetthe{proposition}

\newaliascnt{corollary}{theorem}
\newtheorem{corollary}[corollary]{Corollary}
\aliascntresetthe{corollary}

\newaliascnt{lemma}{theorem}
\newtheorem{lemma}[lemma]{Lemma}
\aliascntresetthe{lemma}

\theoremstyle{definition}
\newaliascnt{definition}{theorem}
\newtheorem{definition}[definition]{Definition}
\aliascntresetthe{definition}

\newaliascnt{remark}{theorem}
\newtheorem{remark}[remark]{Remark}
\aliascntresetthe{remark}

\newaliascnt{assumption}{theorem}

\aliascntresetthe{assumption}

\title{Expressivity Saturation: Reduced Affine Region Usage Under Increasing Task Complexity}

\author{\name Xuan Qi$^{*,\dagger}$ \email xuan.qi@iit.it \\
      \addr AI for Good (AIGO), Istituto Italiano di Tecnologia\\
            DITEN, University of Genoa
      \AND
      \name Yi Wei$^{*}$ \email ywei@smail.nju.edu.cn \\
      \addr State Key Laboratory of Novel Software Technology \\
            School of Intelligence Science and Technology \\
            Nanjing University
      \AND
      \name Fanqi Yu \email fanqi.yu@iit.it\\
      \addr AI for Good (AIGO), Istituto Italiano di Tecnologia\\
            DITEN, University of Genoa
      \AND
      \name Manuel Lecha \email manuel.lecha@iit.it\\
      \addr Istituto Italiano di Tecnologia}

\def\month{06}  
\def\year{2026} 
\def\openreview{\url{https://openreview.net/forum?id=JiyZE3yKv8}} 

\begin{document}

\maketitle

\begingroup
\renewcommand{\thefootnote}{\fnsymbol{footnote}}
\footnotetext[1]{Equal contribution.}
\footnotetext[2]{Corresponding author.}
\endgroup

\begin{abstract}
Piecewise-affine neural networks (e.g., with ReLU or LeakyReLU activations) implement continuous piecewise-affine maps, and the number of affine regions provides a natural proxy for expressive capacity. However, the gap between theoretical region capacity and the affine regions realized after training remains insufficiently understood. We study this gap from two complementary perspectives. First, we give a rigorous, architecture-dependent theorem for affine line-segment probes: for multilayer perceptrons with piecewise-affine activations, the number of affine pieces realized along an affine line-segment probe is upper bounded by an explicit product of layer-wise width terms (and activation breakpoint factors). This yields a neuron-threshold lower bound for representing target functions with prescribed one-dimensional piece complexity, formalizing the minimal region budget required for complex signals. Second, we exactly enumerate affine regions realized within bounded 2D and higher-dimensional domains under controlled task complexity. Under fixed architectures and training protocols, increasing input--label complexity yields trained solutions with markedly fewer realized regions in the evaluation domain, even though worst-case architectural capacity is unchanged; we call this reduced region usage expressivity saturation. Moreover, in the most challenging regimes, 2D visualizations show that region-usage collapse often coincides with degraded decision boundaries. Finally, we visualize the training dynamics of affine-region partitions and decision boundaries, revealing a consistent refinement process during optimization.
\end{abstract}

\section{Introduction}
\label{sec:intro}

Neural networks with piecewise-affine activations (e.g., ReLU~\cite{book37} and LeakyReLU~\cite{book3}) define continuous piecewise-affine (CPWA) maps. This viewpoint provides a concrete and interpretable representation of network expressivity: the input space is partitioned into affine regions on which the network acts affinely, and the number and geometry of these affine regions quantify how the network allocates functional complexity~\cite{book1,book14}. During training the affine-region partition evolves over epochs, with successive layers progressively refining earlier partitions and yielding increasingly intricate decision structures~\cite{book59}.

A large body of prior work studies the \emph{maximum} number of regions that a given architecture can realize and shows strong depth-dependent gains in worst-case constructions~\cite{book11,book31,book43}. However, these combinatorial capacity results do not directly explain what happens in trained models: in practice, the number of \emph{realized} affine regions is often much smaller than the theoretical maximum, and it depends strongly on initialization, optimization, and data~\citep{book8,book49}. This leaves an important gap between \emph{expressive capacity} and \emph{expressive utilization}.

This gap is especially relevant when the input--label relation is highly complex, i.e., when accurately representing the target may require rapid changes of the decision function at fine input scales. In such regimes, a network may have large worst-case theoretical capacity, yet optimization may still converge to solutions that utilize only a small fraction of that capacity in terms of realized affine partitions. Motivated by this discrepancy, we ask:

\begin{quote}
\emph{For a fixed architecture and training protocol, how does increasing complexity of the input--label relation affect the number of affine regions that are actually realized within a bounded domain, and how does this relate to the emergence (or failure) of decision boundaries?}
\end{quote}

\paragraph{Our perspective.}
We address this question via a combined theoretical and empirical analysis under fixed architectures and training protocols.
On the theoretical side, we analyze one-dimensional probes (line segments in input space), where the affine-piece structure of CPWA networks is exactly countable and yields architecture-dependent bounds.
On the empirical side, we perform exact bounded-domain enumeration of realized affine regions in 2D and higher dimensions under controlled task complexity, and we use 2D decision-boundary visualizations and training-time snapshots to connect region formation to classification behavior and its optimization dynamics. The 1D theoretical results are probe-restricted and do not directly imply the higher-dimensional empirical observations; rather, the two parts of the paper are intended to be complementary.

This perspective leads to a central phenomenon that we call \emph{expressivity saturation}: in the controlled low-dimensional fully-connected ReLU MLP settings studied here, under fixed architectures and training protocols, increasing task complexity in our random-label/sample-size regimes can sharply reduce the number of \emph{realized} affine regions exhibited by the trained network within a bounded domain, even though the architecture's worst-case theoretical region capacity is unchanged.
In the most extreme regimes we tested, this reduction in realized affine regions coincides with a failure to form effective decision boundaries, indicating a regime where optimization yields both low region usage and poor classification geometry.
In this paper, \emph{expressivity saturation} denotes the empirically observed reduction in realized affine-region usage in the controlled low-dimensional fully-connected ReLU MLP regimes studied here, where random labels and increasing sample size are used as a proxy for increasing task complexity.

Our contributions are as follows:
\begin{enumerate}[leftmargin=1.2em]
    \item \textbf{A rigorous 1D affine-region budget theorem.} 
    We prove a deterministic, architecture-dependent upper bound on the number of affine pieces that a piecewise-affine MLP can realize along any affine line-segment probe.
    As a corollary, we obtain a neuron-threshold lower bound for representing targets with prescribed one-dimensional piece complexity.
    
   \item \textbf{Empirical evidence of expressivity saturation in controlled low-dimensional ReLU MLP settings.}
    By enumerating realized affine regions in 2D and higher-dimensional settings, we empirically demonstrate that, for fixed fully-connected ReLU MLP architectures and training protocols in our controlled random-label/sample-size regimes, increasing task complexity can sharply reduce the number of realized regions within a bounded domain, despite unchanged worst-case theoretical capacity.

    \item \textbf{Geometric and dynamical evidence linking region collapse to decision-boundary failure.} 
    Our 2D visualizations show that, in the most challenging regimes we tested, an abrupt reduction in realized affine regions often coincides with the failure to form effective decision boundaries, connecting region-usage collapse to poor boundary geometry in practice. Tracking affine partitions and decision boundaries throughout training further reveals a refinement process.
\end{enumerate}

\section{Related Work}
\label{sec:related}

\subsection{Counting Affine Regions}
Understanding the geometric complexity of piecewise-affine neural networks (PANNs) has attracted significant attention.
Early efforts include~\citet{book8}, who proposed an algorithm to compute the number of affine regions in maxout networks.
\citet{book9} introduced a polyhedral extraction method that operates on subdivided edges rather than explicitly enumerating full regions.
\citet{book10} proposed SplineCam, a technique grounded in PANN theory for high-fidelity computation of network geometry.
In the context of graph neural networks, \citet{book11} derived tight upper and lower bounds on the number of affine regions.
For convolutional ReLU networks, \citet{book12} analyzed both the maximal and average number of regions.
Leveraging tropical geometry, \citet{book13} explicitly calculated the number of boundary and affine segments.
To mitigate the exponential growth of affine regions with depth, \citet{book14} proposed an accuracy-based approach that reduces this growth to a polynomial function of width.

\subsection{Evolution of Affine Regions}
Beyond counting, the dynamics of affine-region formation have also been studied.
\citet{book15} employed a geometric framework to analyze how PANNs hierarchically organize input signals.
\citet{book16} conducted empirical analyses of how the count and density of affine regions evolve in reinforcement learning tasks with continuous control.
\citet{book17} examined the relationship between ReLU network architecture and the configuration of decision regions.
In addition, \citet{book18} showed that uniform sampling becomes exponentially inefficient for recovering small-volume regions as input dimensionality grows.

\subsection{Expressive Capacity of Neural Networks}
The expressive capacity of PANNs has been investigated from multiple perspectives.
The roles of network depth and width have been characterized in~\citep{book20,book21,book22,book23,book43,book52,book60},
while the influence of training parameters on expressiveness has been examined in~\citep{book24,book25,book26,book27,book49}.
Furthermore, extensions of the universal approximation theorem~\citep{book35} have been developed in~\citep{book30,book31,book32,book33,book51},
providing theoretical foundations for the representational power of PANNs.
Complementary to these global expressivity results, recent work on local complexity in ReLU networks characterizes expressive capacity through the density of affine regions over the input distribution and links it to representation learning and optimization dynamics~\citep{book50,book58}.

\paragraph{Positioning of our contribution.}
Our work complements prior studies on worst-case expressivity by linking the complexity of the input--label relation (under controlled setups and fixed training protocols) to the \emph{realized} affine-region usage of a fixed PANN. We derive a line-restricted region upper bound with a corresponding neuron-threshold lower bound, and use \emph{exact} bounded-domain enumeration to show that, in challenging regimes, trained networks may exhibit region-usage collapse together with impaired decision boundaries.

\section{Preliminaries}
\label{sec:prelim}

\subsection{Piecewise-Affine Neural Networks and Affine Regions}

We consider a fully-connected multilayer perceptron (MLP) equipped with a continuous piecewise-affine (CPWA) activation function. Let $\sigma: \mathbb{R} \to \mathbb{R}$ be a scalar CPWA activation function. By definition, there exists a finite set of breakpoints
\[
-\infty = t_0 < t_1 < t_2 < \dots < t_K < t_{K+1} = \infty
\]
such that on each open interval $(t_{k-1}, t_k)$ for $k \in \{1, \dots, K+1\}$, $\sigma$ acts as an affine transformation
\[
\sigma(z) = c_k z + d_k.
\]
For instance, the ReLU activation has a single breakpoint at $t_1=0$ with $K=1$, resulting in two affine pieces.

The PANN $f: \mathbb{R}^d \to \mathbb{R}^m$ of depth $L$ is recursively defined as
\begin{equation}
\begin{aligned}
    h_0(x) &= x, \\
    h_\ell(x) &= \sigma\!\left(W_\ell h_{\ell-1}(x) + b_\ell\right), \qquad \ell=1,\dots,L, \\
    f(x) &= W_{L+1} h_L(x) + b_{L+1},
\end{aligned}
\end{equation}
where $x \in \mathbb{R}^d$ is the input, $n_\ell$ denotes the width (number of neurons) of the $\ell$-th hidden layer, $W_\ell \in \mathbb{R}^{n_\ell \times n_{\ell-1}}$ and $b_\ell \in \mathbb{R}^{n_\ell}$ are the weight matrix and bias vector for layer $\ell$, and the activation $\sigma$ is applied element-wise.

The piecewise-affine nature of $\sigma$ induces a partition of the input space $\mathbb{R}^d$ into distinct regions based on the activation states of the hidden neurons. Let $\theta$ represent the fixed network parameters.

\begin{definition}[Affine region~\cite{book1}]
Fix the network parameters $\theta$. An \emph{affine region} of the network $f(\cdot;\theta)$ is a maximal connected subset $\Omega \subset \mathbb{R}^d$ with non-empty interior on which the mapping $x \mapsto f(x;\theta)$ is affine.
\end{definition}

For deep networks with CPWA activations, these affine regions are determined by the intersection of pre-activation conditions across all layers and form polyhedral cells. To obtain a mathematically exact and tractable notion of one-dimensional complexity, we next restrict attention to \emph{affine line-segment probes}.

\subsection{Line-Restricted Affine Complexity}

To rigorously study realized expressivity in one dimension, we restrict the network to affine line segments in input space.

\begin{definition}[Affine line-segment probe]
An \emph{affine line-segment probe} is a map $\gamma:[0,1]\to\mathbb{R}^d$ of the form
\begin{equation}
\gamma(s)=x_0+s\,v, \qquad s\in[0,1],
\end{equation}
for some $x_0,v\in\mathbb{R}^d$ with $v\neq 0$.
\end{definition}

For such probes, the composition $f\circ\gamma$ is a CPWA function of the scalar parameter $s$, so its affine-piece structure is well defined and exactly countable.

\begin{definition}[Line-restricted region count]
Let $\gamma:[0,1]\to\mathbb{R}^d$ be an affine line-segment probe. The \emph{line-restricted affine region count} of a network $f$ along $\gamma$ is the number of affine pieces formed by the composition $f\circ\gamma$. Formally,
\begin{equation}
\lr(f;\gamma)
:=
\min\left\{
R\in\mathbb{N}
\ \middle|\
\begin{array}{l}
\exists \text{ a partition } 0=\tau_0<\tau_1<\dots<\tau_R=1 \text{ such that}\\
f\circ\gamma \text{ is affine on each open interval } (\tau_{r-1},\tau_r)
\end{array}
\right\}.
\end{equation}
\end{definition}

This quantity is exact, mathematically tractable, and directly captures the number of affine transformations the network deploys along an affine line probe in input space.

\subsection{Target Complexity Along a Probe}

To mathematically ground our investigation into the minimum network capacity required to express highly irregular one-dimensional signals, we define the intrinsic affine complexity of a target map on the probe parameter domain.

\begin{definition}[1D piece complexity]
Let $g:[0,1]\to\mathbb{R}^m$ be a target mapping. Its \emph{one-dimensional piece complexity} is defined as
\begin{equation}
Q(g)
:=
\min\left\{
M\in\mathbb{N}
\ \middle|\
\begin{array}{l}
\exists \text{ a partition of } [0,1] \text{ into } M \text{ connected intervals}\\
\text{on each of which } g \text{ is affine}
\end{array}
\right\}.
\end{equation}
\end{definition}

If $g$ cannot be represented as a finite piecewise-affine function on $[0,1]$, we define $Q(g)=\infty$. For classification tasks restricted to a probe, $g$ can be instantiated as the classification margin; consequently, the number of distinct label alternations along the probe parameter provides a lower bound on $Q(g)$. This metric serves as the complexity threshold in our capacity analysis.

\section{A Rigorous Capacity Bound for CPWA Networks on 1D Probes}
\label{sec:theory}

\subsection{Architecture-dependent upper bound on line regions}

We consider CPWA activation functions with $K$ breakpoints. Along any affine line-segment probe, each neuron receives an affine scalar pre-activation on every incoming affine piece, and therefore can introduce at most $K$ split points on that piece.

\begin{theorem}[Line-region upper bound for CPWA MLPs]
\label{thm:line_upper_bound}
Let $f:\mathbb{R}^d\to\mathbb{R}^m$ be a fully-connected neural network with hidden layer widths $(n_1,\dots,n_L)$, equipped with a CPWA activation function $\sigma$ having $K$ breakpoints (e.g., $K=1$ for ReLU or LeakyReLU). Let $\gamma:[0,1]\to\mathbb{R}^d$ be an affine line-segment probe. Then the line-restricted region count is upper bounded by
\begin{equation}
\lr(f;\gamma)\ \le\ \prod_{\ell=1}^{L}(K n_\ell+1).
\end{equation}
\end{theorem}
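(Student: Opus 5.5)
We argue by induction on the depth, tracking a partition of the probe parameter interval $[0,1]$ on whose open pieces every hidden layer's output is affine in $s$. Write $P_0$ for the trivial partition $\{0,1\}$, which has one piece. On it, $h_0(\gamma(s))=x_0+sv$ is affine in $s$ because $\gamma$ is an affine line-segment probe. The inductive claim is: there is a partition $P_\ell$ of $[0,1]$ into at most $R_\ell:=\prod_{j=1}^{\ell}(Kn_j+1)$ intervals such that $s\mapsto h_\ell(\gamma(s))$ is affine on each open interval of $P_\ell$.

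For the inductive step, fix an open interval $I$ of $P_{\ell-1}$. On $I$, each pre-activation coordinate $z_i(s)=(W_\ell h_{\ell-1}(\gamma(s))+b_\ell)_i$, $i=1,\dots,n_\ell$, is an affine scalar function of $s$. We split into two cases.
\begin{itemize}
\item If $z_i$ is constant on $I$, then $\sigma(z_i(s))$ is constant on $I$ and introduces no split point.
\item If $z_i$ is nonconstant, it is strictly monotone, so it crosses each breakpoint $t_k$ at most once. This gives at most $K$ points of $I$ where $z_i(s)\in\{t_1,\dots,t_K\}$.
\end{itemize}
Between consecutive such points, $z_i(s)$ stays in one open interval $(t_{k-1},t_k)$, where $\sigma(z)=c_kz+d_k$. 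Hence $\sigma(z_i(s))$ is affine there.

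Taking the union over all $n_\ell$ neurons gives at most $Kn_\ell$ new cut points inside $I$. So $I$ is subdivided into at most $Kn_\ell+1$ open subintervals, and on each of them every coordinate of $h_\ell(\gamma(s))$ is affine in $s$. Refining every interval of $P_{\ell-1}$ in this way produces $P_\ell$ with at most $R_{\ell-1}(Kn_\ell+1)=R_\ell$ pieces. The endpoints of the intervals of $P_{\ell-1}$ are kept as cut points, so the new partition is a genuine refinement.

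Finally, $f\circ\gamma=W_{L+1}h_L\circ\gamma+b_{L+1}$ is affine on each open interval of $P_L$. So $P_L$ is an admissible partition in the definition of $\lr(f;\gamma)$, with at most $R_L$ pieces. Since $\lr$ is defined as a minimum over admissible partitions, it follows that $\lr(f;\gamma)\le\prod_{\ell=1}^L(Kn_\ell+1)$. Merging of adjacent pieces with equal slopes can only lower the count and does not affect the inequality.

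The only delicate point is the counting in the inductive step. It must be done per incoming piece, and the new cut points must lie in the open interval $I$. A neuron whose pre-activation sits exactly at a breakpoint on all of $I$ (the constant case) creates no split. A nonconstant affine $z_i$ cannot touch the same breakpoint twice. Together these keep the count at $K$ per neuron per piece. Cut points from different neurons may coincide, which only helps, so no genericity assumption is needed.
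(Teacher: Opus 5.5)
Your proof is correct and follows essentially the same route as the paper. The shared steps are an induction over layers starting from a single affine piece for $h_0\circ\gamma$, and a per-piece count of at most $K$ breakpoint crossings per neuron, with a constant pre-activation contributing none. A union bound over neurons then gives at most $Kn_\ell+1$ subintervals per incoming piece, and the final observation is that the affine output layer keeps $f\circ\gamma$ affine on each piece, so $\lr(f;\gamma)$ is bounded by the product. The only difference is cosmetic: you carry an explicit refined partition $P_\ell$ instead of the minimal counts $R_\ell$, and you fold the paper's three lemmas into a single inductive step.
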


\begin{remark}
This is a deterministic upper bound that depends only on the architecture (depth, widths, and activation breakpoint count), independent of initialization or training dynamics. The theorem is \emph{probe-restricted}: it applies to affine line-segment probes and does not, in general, extend to arbitrary nonlinear curves in input space.
\end{remark}

\subsection{Minimum neuron budget for representing a 1D target}

\begin{corollary}[Neuron-threshold lower bound on a fixed-depth CPWA architecture]
\label{cor:neuron_threshold}
Let $g:[0,1]\to \mathbb{R}^m$ be a target map with $Q(g)$ affine pieces. Suppose a CPWA network $f$ (with $K$-breakpoint activations) exactly realizes $g$ along an affine line-segment probe $\gamma$, namely
\begin{equation}
f(\gamma(s)) = g(s), \qquad \forall s\in[0,1].
\end{equation}
Then the network's architectural capacity must satisfy
\begin{equation}
\prod_{\ell=1}^{L}(K n_\ell+1)\ \ge\ Q(g).
\end{equation}
Consequently, by the AM--GM inequality, if the total number of hidden neurons is $N=\sum_{\ell=1}^L n_\ell$, then
\begin{equation}
N \ \ge\ \frac{L}{K}\big(Q(g)^{1/L}-1\big).
\label{eq:amgm_lower_bound}
\end{equation}
For architectures with equal-width hidden layers ($n_\ell=w$ for all $\ell$), this simplifies to
\begin{equation}
w \ \ge\ \left\lceil \frac{Q(g)^{1/L}-1}{K} \right\rceil.
\end{equation}
\end{corollary}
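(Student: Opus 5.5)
The corollary follows in three steps: compare $Q(g)$ with $\lr(f;\gamma)$, invoke \cref{thm:line_upper_bound}, and then turn the product bound into a bound on neuron counts via AM--GM.

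\textbf{Step 1.} We show $Q(g)\le \lr(f;\gamma)$. Set $R=\lr(f;\gamma)$ and let $0=\tau_0<\dots<\tau_R=1$ be a partition on whose open intervals $f\circ\gamma=g$ is affine. On each $(\tau_{r-1},\tau_r)$ the map $g$ coincides with an affine map $A_r$. The function $f\circ\gamma$ is continuous, being a composition of continuous maps, so $g$ is continuous on $[0,1]$. Hence $g=A_r$ on the closed interval $[\tau_{r-1},\tau_r]$. Partition $[0,1]$ into the $R$ connected intervals $[\tau_0,\tau_1],(\tau_1,\tau_2],\dots,(\tau_{R-1},\tau_R]$; $g$ is affine on each. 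By the definition of $Q$ as a minimum, $Q(g)\le R$. This also shows $Q(g)<\infty$. Combining with \cref{thm:line_upper_bound} gives $\prod_{\ell}(Kn_\ell+1)\ge \lr(f;\gamma)\ge Q(g)$.

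\textbf{Step 2 (AM--GM).} The numbers $a_\ell=Kn_\ell+1$ are positive, so
\[
Q(g)^{1/L}\le\Big(\prod_{\ell=1}^L a_\ell\Big)^{1/L}\le \frac1L\sum_{\ell=1}^L a_\ell=\frac{KN+L}{L}.
\]
Rearranging gives $KN\ge L\big(Q(g)^{1/L}-1\big)$, which is \eqref{eq:amgm_lower_bound}; here we use $K\ge1$, since the activation has at least one breakpoint. In the equal-width case the product is $(Kw+1)^L\ge Q(g)$, so $w\ge (Q(g)^{1/L}-1)/K$. Since $w$ is an integer, we may take the ceiling.

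\textbf{Main obstacle.} The only delicate point is the endpoint bookkeeping in Step 1. The definition of $\lr$ uses open intervals, while the definition of $Q$ uses a partition into connected intervals, so the breakpoints must be absorbed without creating extra pieces. Continuity of $f\circ\gamma$ handles this. Everything else is immediate from \cref{thm:line_upper_bound}.
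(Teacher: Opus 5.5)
Your proof is correct and follows the same route as the paper: derive $Q(g)\le \lr(f;\gamma)$ from the witnessing partition, chain it with \Cref{thm:line_upper_bound}, apply AM--GM to the factors $Kn_\ell+1$, and take the integer ceiling in the equal-width case. Your continuity argument for passing from open intervals to a genuine partition of $[0,1]$ spells out an endpoint detail that the paper asserts without comment. Your remark that dividing by $K$ needs $K\ge 1$ likewise states an assumption the paper leaves implicit.
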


\Cref{cor:neuron_threshold} states that to express a sufficiently complex one-dimensional target along an affine line probe, the architecture must supply an adequate line-region budget. The complexity burden is mitigated linearly by the activation breakpoint count $K$ and exponentially by the depth $L$.

\subsection{Practical region-yield heuristic}

While the deterministic theorem establishes a strict \emph{capacity} upper bound, practical optimization typically yields significantly fewer regions. Empirically and theoretically, the average number of regions realized along affine line probes often scales roughly linearly with the total number of neurons under standard initializations for PANNs~\cite{book53}. This motivates the following practical efficiency metrics.

\begin{definition}[Realized region efficiency on affine line probes]
Let $\Gamma$ be a distribution over affine line-segment probes, and let $f$ be a trained CPWA network. We define the mean realized line-region count as
\begin{equation}
\bar R(f;\Gamma)
:=
\mathbb{E}_{\gamma\sim\Gamma}[\lr(f;\gamma)].
\end{equation}
The realized region efficiency relative to the theoretical upper bound is defined as
\begin{equation}
\eta_{\mathrm{norm}}(f;\Gamma)
:=
\frac{\bar R(f;\Gamma)}{\prod_{\ell=1}^{L}(K n_\ell+1)}.
\end{equation}
\end{definition}

Equivalently, it is often useful to measure the \emph{regions-per-neuron} statistic
\begin{equation}
\alpha(f;\Gamma)
:=
\frac{\bar R(f;\Gamma)}{N}.
\end{equation}
In standard training regimes, $\alpha$ often remains within a modest, architecture-dependent range. However, our empirical findings indicate that $\alpha$ can collapse when the intrinsic complexity of the input--label mapping exceeds the effective capacity of the fixed architecture under the given training protocol.

\paragraph{Practical threshold estimator.}
If a target distribution intrinsically requires approximately $Q$ affine pieces along probes sampled from $\Gamma$, we propose the rule of thumb
\begin{equation}
N_{\min}^{\mathrm{practical}} \approx \left\lceil \frac{Q}{\alpha(f;\Gamma)} \right\rceil,
\end{equation}
where $\alpha(f;\Gamma)$ is estimated empirically for the architecture family, initialization scheme, and optimization algorithm of interest.

\paragraph{Proofs.}
Proofs of the main theoretical results are provided in Appendix~\ref{app:proofs}.

\section{Counting Methodology}
\label{sec:method}

We describe how we \emph{exactly enumerate} the number of affine regions realized by a trained PANN within a bounded input domain. Our methodology separates (i) \emph{exact} one-dimensional (1D) counting along affine line probes, used to support the theoretical analysis, from (ii) \emph{exact bounded-domain enumeration} in two-dimensional (2D) and higher-dimensional settings, where we exhaustively traverse \emph{all} affine regions intersecting the prescribed domain \(\Omega\).

\subsection{Exact counting along one-dimensional probes}
\label{subsec:method_1d}
Let \(f:\mathbb{R}^d\to\mathbb{R}^m\) be a multilayer perceptron with piecewise-affine activations, and let \(\gamma:[a,b]\to\mathbb{R}^d\) be an affine line segment. Equivalently, by an affine reparameterization of the scalar variable, one may work on a general interval \([a,b]\) instead of \([0,1]\). Then the composed map \(s\mapsto f(\gamma(s))\) is CPWA in the scalar parameter \(s\). We compute the exact number of affine pieces by propagating an interval partition of \([a,b]\) through the network layer by layer. On any current interval, each neuron's pre-activation is affine in \(s\), and hence can hit each activation breakpoint at most once on that interval. Therefore, for a CPWA activation with breakpoints \(\{t_1,\dots,t_K\}\), each neuron contributes at most \(K\) split points per incoming affine interval; for ReLU or LeakyReLU (\(K=1\), breakpoint at \(0\)), this reduces to splitting at pre-activation roots.

\begin{algorithm}[t]
\caption{Exact affine-piece counting along an affine 1D line probe for CPWA MLPs}
\label{alg:line_count_general}
\begin{algorithmic}[1]
\Require Network \(f\) with \(L\) hidden layers; affine line-segment probe \(\gamma:[a,b]\to \mathbb{R}^d\); activation breakpoints \(\{t_1,\dots,t_K\}\)
\State Initialize partition \(\mathcal{P}\gets \{[a,b]\}\)
\For{\(\ell=1,\dots,L\)}
    \State \(\mathcal{P}_{\mathrm{new}}\gets \emptyset\)
    \For{each interval \(I\in \mathcal{P}\)}
        \State Represent \(h_{\ell-1}(\gamma(s))\) as an affine function of \(s\) on \(I\)
        \State Compute all activation-breakpoint hits on \(I\):
        \Statex \hspace{1.6em}
        \(\displaystyle
        \mathcal{Z}\gets
        \bigcup_{j=1}^{n_\ell}\bigcup_{k=1}^{K}
        \{\, s\in \mathrm{int}(I): a_{\ell,j}(s)=t_k \,\}
        \)
        \State Split \(I\) at the sorted points in \(\mathcal{Z}\) and add the resulting subintervals to \(\mathcal{P}_{\mathrm{new}}\)
    \EndFor
    \State \(\mathcal{P}\gets \mathcal{P}_{\mathrm{new}}\)
\EndFor
\State \Return \(\lvert \mathcal{P}\rvert\) \Comment{Exact number of affine pieces of \(f\circ\gamma\)}
\end{algorithmic}
\end{algorithm}

\subsection{Exact bounded-domain region enumeration in 2D and higher dimensions}
\label{subsec:method_hd}

Exact global counting of affine regions over \(\mathbb{R}^d\) quickly becomes computationally prohibitive as depth, width, and input dimension increase. For \(d\ge 2\), we therefore perform \emph{exact bounded-domain enumeration}: we exhaustively enumerate all realized affine cells intersecting a prescribed compact domain \(\Omega\subset\mathbb{R}^d\), rather than attempting global enumeration over \(\mathbb{R}^d\).

\paragraph{Bounded-domain objective.}
Let \(\Omega\subset\mathbb{R}^d\) be a compact domain of interest (e.g., \(\Omega=[-1,1]^d\)). We aim to exactly enumerate
\[
\mathcal{R}_\Omega(f):=\{\text{affine cells of }f\text{ intersecting }\Omega\},
\]
and compute statistics derived from \(\mathcal{R}_\Omega(f)\) within \(\Omega\) (in particular, the exact region count \(|\mathcal{R}_\Omega(f)|\)).

\paragraph{Activation-region representation.}
For piecewise-affine MLPs, fixing an activation pattern induces a local affine map \(f(x)=Ax+b\) on an associated affine region (possibly one connected component among multiple components sharing the same activation signature). Hence, bounded-domain enumeration can be formulated as traversal over activation-induced candidate regions under feasibility and intersection constraints with \(\Omega\).

\paragraph{Enumeration workflow.}
Starting from one or more feasible seeds intersecting \(\Omega\), we iteratively (i) recover local affine/hyperplane descriptors for the current region, (ii) identify valid neighboring regions via boundary-crossing operations, (iii) test intersection with \(\Omega\), and (iv) register previously unseen regions. When the frontier is exhausted, this procedure yields an \emph{exactly enumerated} region set \(\widehat{\mathcal{R}}_\Omega=\mathcal{R}_\Omega(f)\) within \(\Omega\), from which region statistics (including region counts) are computed exactly.

\begin{algorithm}[t]
\caption{Exact bounded-domain affine-region enumeration for \(d\ge 2\)}
\label{alg:hd_region_explore}
\begin{algorithmic}[1]
\Require Trained PANN \(f\), bounded domain \(\Omega\subset\mathbb{R}^d\)
\Ensure Exactly enumerated region set \(\widehat{\mathcal{R}}_\Omega=\mathcal{R}_\Omega(f)\) and exact region statistics in \(\Omega\)
\State Find an initial feasible region \(r_0\) such that \(r_0\cap\Omega\neq\emptyset\)
\State \(\mathcal{Q}\leftarrow \{r_0\}\), \quad \(\widehat{\mathcal{R}}_\Omega\leftarrow \emptyset\)
\While{\(\mathcal{Q}\neq\emptyset\)}
    \State Pop one candidate region \(r\) from \(\mathcal{Q}\)
    \If{\(r\in\widehat{\mathcal{R}}_\Omega\)}
        \State \textbf{continue}
    \EndIf
    \If{\(r\cap\Omega=\emptyset\)}
        \State \textbf{continue}
    \EndIf
    \State Add \(r\) to \(\widehat{\mathcal{R}}_\Omega\)
    \State Compute local descriptors of \(r\) (activation pattern / affine form / boundary hyperplanes)
    \State Generate neighboring candidates \(\mathcal{N}(r)\) by valid boundary-crossing operations
    \For{each \(r'\in\mathcal{N}(r)\)}
        \If{\(r'\notin\widehat{\mathcal{R}}_\Omega\) and \(r'\) is feasible}
            \State Push \(r'\) into \(\mathcal{Q}\)
        \EndIf
    \EndFor
\EndWhile
\State \Return \(\widehat{\mathcal{R}}_\Omega\) and statistics derived from \(\widehat{\mathcal{R}}_\Omega\)
\end{algorithmic}
\end{algorithm}

\section{Experiments}
\label{sec:exp}

\subsection{Implementation and common setup}
\label{subsec:exp_setup}

All experiments are implemented in PyTorch using fully-connected ReLU MLPs. Hidden architectures are denoted by \([n_1,n_2,\dots,n_L]\), where each hidden layer is followed by a ReLU and the output layer is linear. Unless otherwise stated, we train with Adam, batch size \(32\), and cross-entropy loss (classification). For 2D experiments, inputs are normalized to \(\Omega=[-1,1]^2\). In 1D, we compute \(\lr(f;\gamma)\) exactly via interval-partition propagation (Algorithm~\ref{alg:line_count_general}). In 2D, we exactly enumerate all realized affine regions intersecting \(\Omega\) by exhaustive bounded-domain region exploration (Algorithm~\ref{alg:hd_region_explore}). Additional ablations on residual connections under complex 2D inputs are reported in Appendix~\ref{app:residuals}, together with further visualizations of decision-boundary dynamics throughout training in Appendix~\ref{app:dynamics}.

\subsection{Exact 1D enumeration}
\label{subsec:exp_1d}

This section uses \emph{exact} 1D affine-piece enumeration to (i) sanity-check our counting pipeline against the proven line-region bound in \Cref{thm:line_upper_bound}, (ii) isolate architecture-induced line complexity via an initialization-only stress test (no optimization), (iii) empirically instantiate the neuron-threshold implication in \Cref{cor:neuron_threshold} on controlled targets with known piece complexity, and (iv) quantify the practical region-yield statistics \(\bar R\), \(\alpha\), and \(\eta_{\mathrm{norm}}\) that motivate the heuristic discussion in \Cref{sec:theory}. All 1D experiments use ReLU activations (i.e., \(K=1\)), so the bound in \Cref{thm:line_upper_bound} specializes to \(\lr(f;\gamma)\le Q_{\mathrm{upper}}\), where \(Q_{\mathrm{upper}}=\prod_{\ell=1}^{L}(n_\ell+1)\).

\subsubsection{Protocol and metrics}
\label{subsec:exp_1d_protocol}

\paragraph{Exact counting along probes.}
For each network \(f\) and probe \(\gamma\), we compute \(\lr(f;\gamma)\) \emph{exactly} using Algorithm~\ref{alg:line_count_general}. For a probe distribution \(\Gamma\), we report the mean region count \(\bar R=\E_{\gamma\sim\Gamma}[\lr(f;\gamma)]\), the regions-per-neuron statistic \(\alpha=\bar R/N\), and the normalized efficiency \(\eta_{\mathrm{norm}}=\bar R/Q_{\mathrm{upper}}\), where \(N=\sum_{\ell=1}^L n_\ell\) is the total number of hidden neurons and \(Q_{\mathrm{upper}}=\prod_{\ell=1}^{L}(n_\ell+1)\) for ReLU networks.

\paragraph{Architectures and probes.}
We evaluate three ReLU MLP families, \( [32]^5 \) (\(N=160\)), \( [64]^3 \) (\(N=192\)), and \( [128]^3 \) (\(N=384\)), aggregated over multiple random seeds. Unless otherwise stated, each run uses \(200\) random line probes for estimating expectations over \(\Gamma\).

\paragraph{Initialization-only stress test.}
To remove optimization effects and probe architecture-induced line complexity directly, we additionally conduct an initialization-only stress test with no training updates. For each architecture, we sample \(S=3\) random seeds and \(M=100\) probes per seed, compute \(\lr(f;\gamma)\) exactly, and record the extreme statistic \(\mathrm{MaxLR}=\max_{\gamma}\lr(f;\gamma)\) as well as the normalized ratio \(\rho_{\max}=\max_{\gamma}\lr(f;\gamma)/Q_{\mathrm{upper}}\); the resulting trends and ratios are summarized in \Cref{fig:init_stress_1x3}.

\paragraph{Controlled synthetic threshold suite.}
To instantiate \Cref{cor:neuron_threshold} under known target complexity, we train depth-\(L=3\) ReLU MLPs with equal-width hidden layers \( [w,w,w] \) (width sweep \(w\in\{2,4,\dots,64\}\), hence \(N=3w\)) on synthetic 1D piecewise-affine targets \(g:[0,1]\to\R\) with piece complexity \(Q(g)\in\{4,8,12,16,24,32\}\). Each \((Q,w)\) is trained with 3 seeds; a run is deemed successful if the max absolute error is \(\le 2\times 10^{-2}\) on a dense evaluation grid over \([0,1]\). We report \(\hat N_{\min}^{\mathrm{obs}}\) as the smallest \(N\) achieving the target success criterion, using both a ``loose'' rule (success rate \(\ge 2/3\)) and a ``strict'' rule (all seeds successful), summarized in \Cref{tab:cor_validation}.

\subsubsection{
  \texorpdfstring
    {Consistency checks for \Cref{thm:line_upper_bound}}
    {Consistency checks for Theorem~\ref{thm:line_upper_bound}}
}
\label{subsec:exp_1d_thm}

\paragraph{Trained networks.}
Across all aggregated trained runs and probes, we observe no violations of the specialized bound, i.e., \texttt{violation\_count}=0. The distribution of \(\lr(f;\gamma)\) computed by exact enumeration is shown in \Cref{fig:probe_stats} (top), and lies far below \(Q_{\mathrm{upper}}\), illustrating the gap between worst-case architectural capacity and realized region usage after training. The same figure also reports layer-wise split statistics (bottom), visualizing how affine-piece refinements accumulate through depth.

\begin{figure}[t]
\centering
\includegraphics[width=0.98\linewidth]{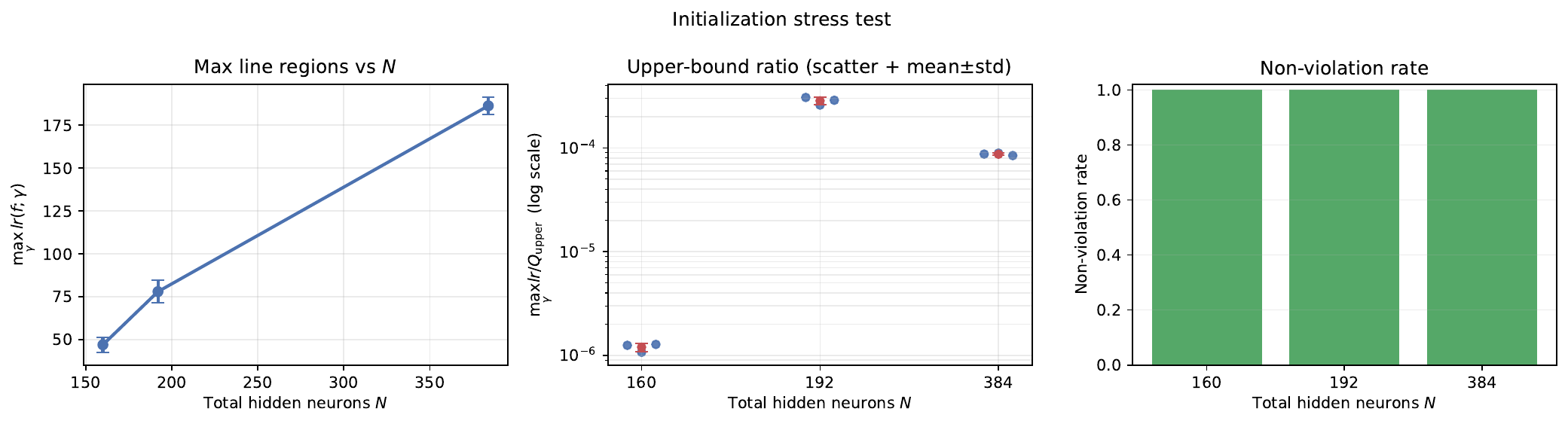}
\caption{\textbf{Initialization-only stress test (exact 1D counting).}
Left: \(\max_{\gamma}\lr(f;\gamma)\) versus \(N\) (mean\(\pm\)std across seeds).
Middle: per-seed \(\rho_{\max}\) scatter with mean\(\pm\)std error bars (log-scale y-axis).
Right: non-violation rate of the theoretical upper bound.}
\label{fig:init_stress_1x3}
\end{figure}

\begin{figure*}[t]
    \centering
    \includegraphics[width=0.98\textwidth]{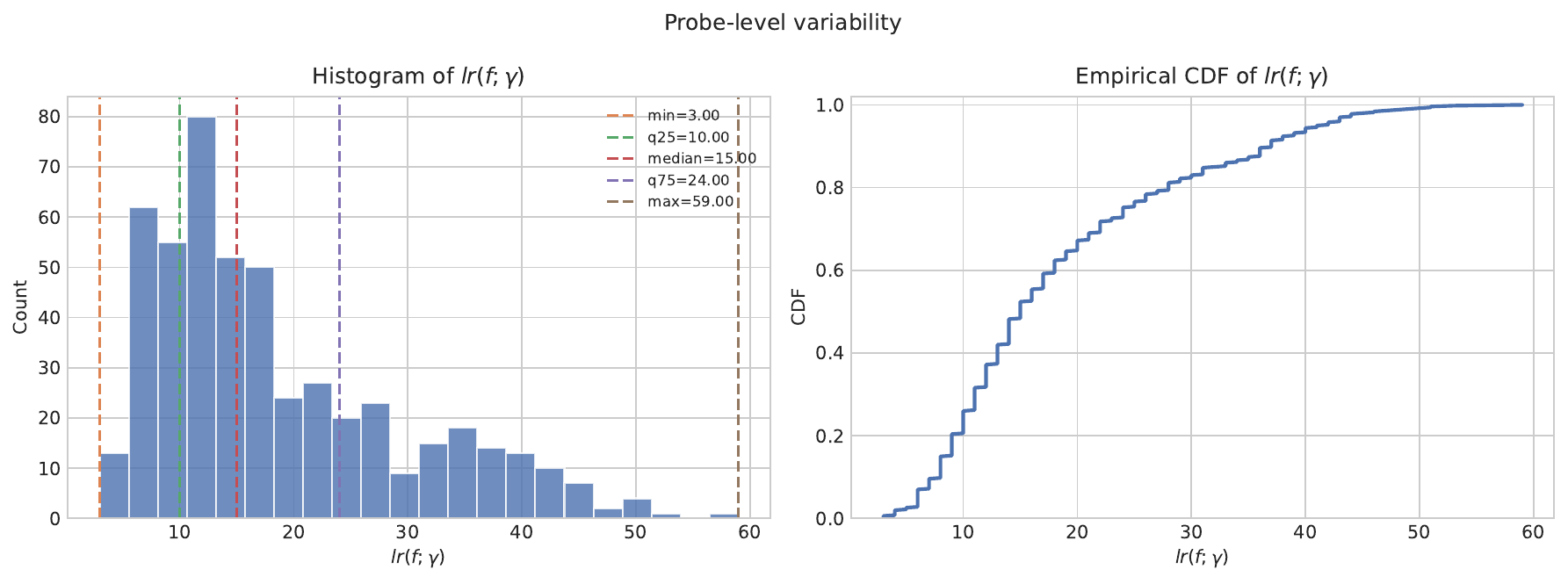}\\[2mm]
    \includegraphics[width=0.98\textwidth]{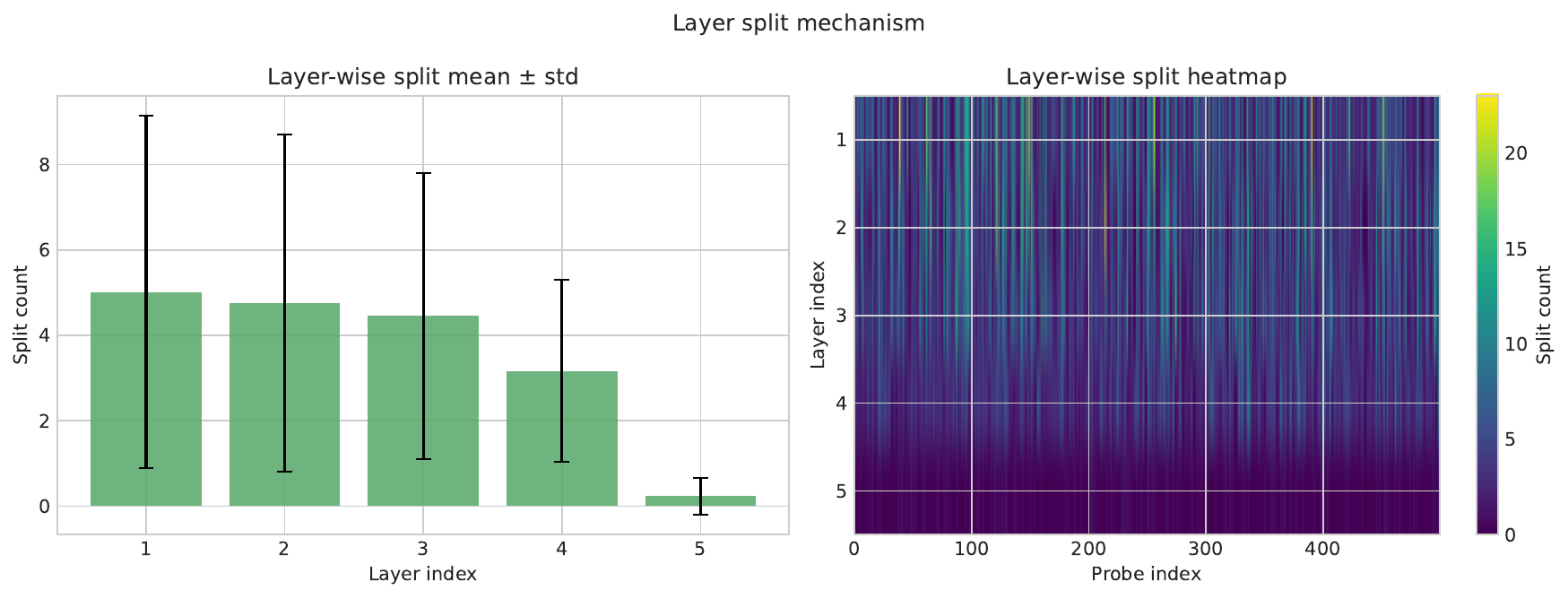}
    \caption{\textbf{Exact 1D probe statistics (trained networks).} Top: distribution/CDF of \(\lr(f;\gamma)\) computed by exact enumeration. Bottom: layer-wise split statistics along probes, illustrating how affine-piece refinements accumulate through depth.}
    \label{fig:probe_stats}
\end{figure*}

\paragraph{Initialization-only stress test.}
We also observe no upper-bound violations under random initialization, and \(\mathrm{MaxLR}\) increases with architecture size; \Cref{fig:init_stress_1x3} (left) reports \(\max_{\gamma}\lr(f;\gamma)\) as a function of \(N\) (mean\(\pm\)std across seeds). Despite this growth, the normalized extreme ratio \(\rho_{\max}\) remains far below \(1\); \Cref{fig:init_stress_1x3} (middle) shows per-seed \(\rho_{\max}\) values (with mean\(\pm\)std) on a log scale, quantifying substantial slack between realized and worst-case theoretical capacity at random initialization. Finally, \Cref{fig:init_stress_1x3} (right) reports the non-violation rate, which is \(1.0\) in all tested settings.

\subsubsection{
  \texorpdfstring
    {Instantiation of \Cref{cor:neuron_threshold} on known-\(Q\) targets}
    {Instantiation of Corollary on known-Q targets}
}
\label{subsec:exp_1d_cor}

For the synthetic suite with \(L=3\) and ReLU, \Cref{cor:neuron_threshold} yields the necessary condition \((w+1)^3\ge Q(g)\), equivalently \(N \ge L\!\left(Q(g)^{1/L}-1\right)\). In our experiments, no successful fit contradicts this implication over the tested range (\Cref{tab:cor_validation}). This should be read as an empirical \emph{non-refutation} of the necessary condition under our optimization protocol and error criterion, rather than as a sharp characterization of the minimal achievable \(N\), since finite optimization budgets can require substantially more neurons than the existence-based lower bound. The observed thresholds in \Cref{tab:cor_validation} are much larger than the theoretical necessary condition, directly quantifying the optimization-limited gap between architectural capacity and realized expressivity. Together with the initialization-only stress test in \Cref{fig:init_stress_1x3}, which already exhibits substantial slack relative to \(Q_{\mathrm{upper}}\), these results underscore that worst-case region-capacity bounds can be loose in typical parameter regimes and motivate reporting \(\alpha\) as a practical summary statistic for realized region yield.

\begin{table}[t]
\centering
\caption{Synthetic threshold suite: observed neuron budgets versus the necessary-condition lower bound in \Cref{cor:neuron_threshold} (\(L=3, K=1\)).}
\label{tab:cor_validation}
\small
\resizebox{\columnwidth}{!}{%
\begin{tabular}{c c c c c}
\toprule
\(Q(g)\) & \(\lceil L(Q^{1/L}-1)\rceil\) & \(\hat N_{\min}^{\mathrm{obs}}\) (loose) & \(\hat N_{\min}^{\mathrm{obs}}\) (strict) & violation \\
\midrule
4  & 2 & 108 & 108 & no \\
8  & 3 & 150 & 150 & no \\
12 & 4 & 138 & 138 & no \\
16 & 5 & 144 & 144 & no \\
24 & 6 & --  & --  & no observed success \\
32 & 7 & --  & --  & no observed success \\
\bottomrule
\end{tabular}%
}
\end{table}

\subsubsection{
  \texorpdfstring
    {Practical region-yield statistics and the \(Q/\alpha\) rule}
    {Practical region-yield statistics and the Q/alpha rule}
}
\label{subsec:exp_1d_practical}

Using the empirically estimated \(\alpha\), we evaluate the rule-of-thumb predictor \(N_{\min}^{\mathrm{practical}}\approx \lceil Q/\alpha\rceil\). The architecture-level trends of \(\bar R\), \(\alpha\), and \(\eta_{\mathrm{norm}}\) are summarized in \Cref{fig:sweep_threshold} (top), while \Cref{fig:sweep_threshold} (bottom) compares the theoretical necessary-condition curve from \Cref{cor:neuron_threshold} with the heuristic \(Q/\alpha\) and an empirical proxy derived from probe statistics.

\begin{figure*}[t]
    \centering
    \includegraphics[width=0.98\textwidth]{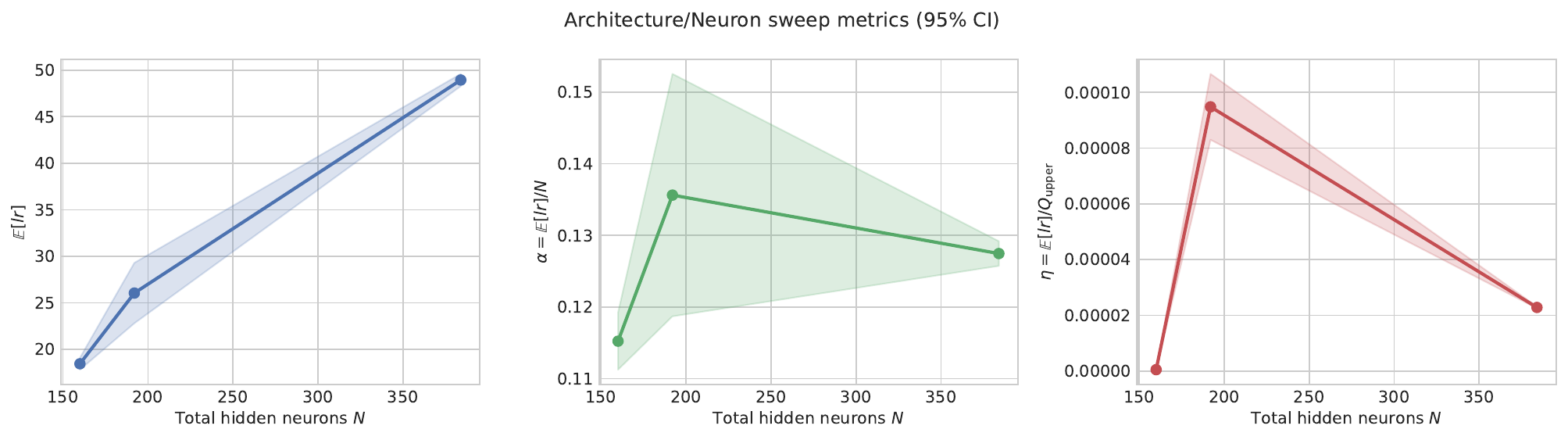}\par\vspace{2mm}
    \includegraphics[width=0.98\textwidth]{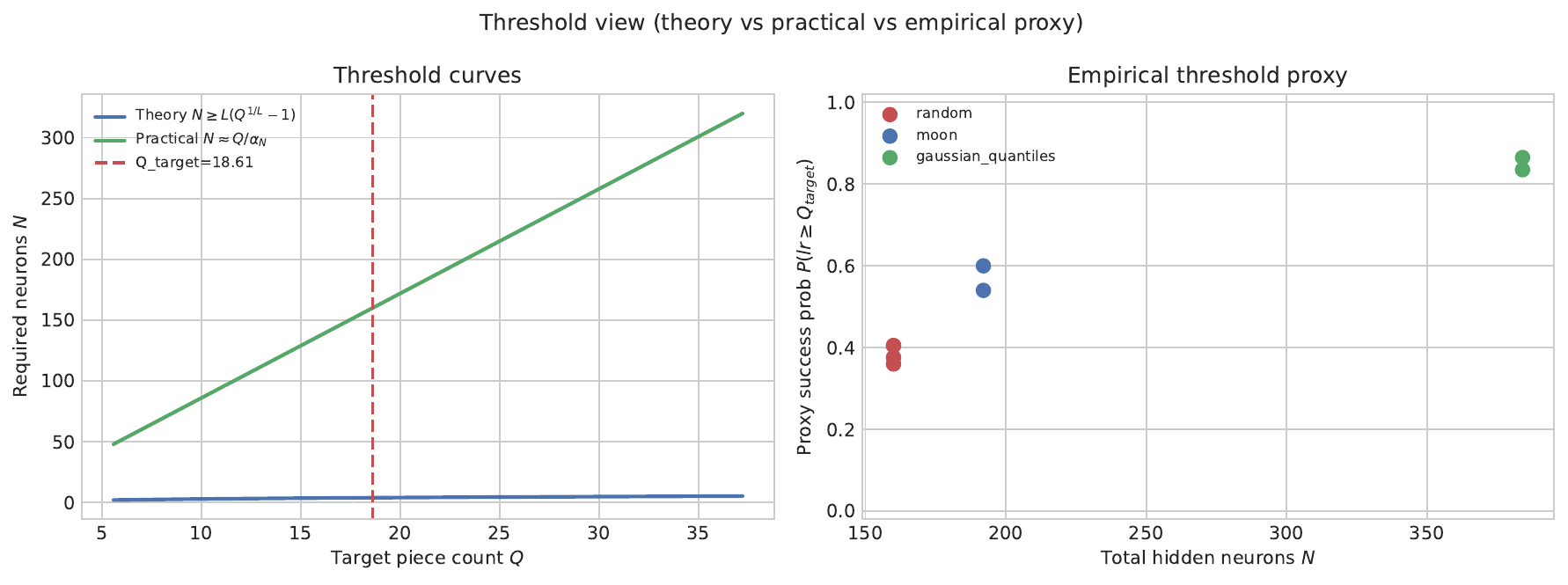}
    \caption{\textbf{Architecture-level 1D trends and threshold proxy.}
    Top: \(\bar R\), \(\alpha\), and \(\eta_{\mathrm{norm}}\) versus \(N\).
    Bottom: theoretical/practical curves together with an empirical proxy \(P(\lr\ge Q_{\text{target}})\) from probe statistics.}
    \label{fig:sweep_threshold}
\end{figure*}

\paragraph{Summary (1D).}
Exact enumeration provides (i) a broad consistency check of our counting pipeline against the proven ReLU-specialized bound in \Cref{thm:line_upper_bound}, including an initialization-only stress test that removes optimization effects (\Cref{fig:init_stress_1x3}), (ii) an empirical non-refutation of the necessary-condition implication in \Cref{cor:neuron_threshold} on controlled targets (\Cref{tab:cor_validation}), and (iii) quantitative region-yield statistics \(\bar R\), \(\alpha\), and \(\eta_{\mathrm{norm}}\) that summarize the realized-region gap and support the practical discussion in \Cref{sec:theory} (\Cref{fig:probe_stats,fig:sweep_threshold}).

\subsection{2D exact region enumeration and decision-boundary visualization}
\label{subsec:exp_2d}

This section studies realized affine-region formation directly in the input plane and links region usage to classification behavior. All 2D region counts reported here are obtained by \emph{exact enumeration} of all affine regions intersecting the bounded domain \(\Omega=[-1,1]^2\) using Algorithm~\ref{alg:hd_region_explore} (bounded-domain exhaustive exploration). We pair exact region counts with decision-boundary visualizations to connect geometric expressivity to classification outcomes.

\subsubsection{Expressivity saturation under increasing task complexity}
\label{subsec:exp_2d_saturation}

We study random 2D inputs with random labels while increasing the number of training samples, using a fixed ReLU MLP architecture \([32,32,32]\) and counting domain \(\Omega=[-1,1]^2\). To quantify variability, all scalar summaries in this subsection are aggregated over \(5\) random seeds and reported as mean \(\pm\) standard deviation. Exact affine-region counts across training epochs and sample sizes are summarized in \Cref{tab:random_counts}. In addition, \Cref{fig:random_training_metrics_2d} reports training-time summaries of optimization and region formation across sample sizes. Representative region and decision-boundary visualizations are shown in \Cref{fig:random_samples_visual}.

\paragraph{Exact region counts and training-time summaries.}
For each seed and training checkpoint, we enumerate \emph{exactly} the affine regions intersecting \(\Omega\) using Algorithm~\ref{alg:hd_region_explore}. Thus, the entries in \Cref{tab:random_counts} are exact bounded-domain counts \(\lvert \mathcal{R}_{\Omega}(f)\rvert\), aggregated over seeds. To complement these counts, \Cref{fig:random_training_metrics_2d} summarizes four views of the same experiment across sample sizes: epoch vs.\ training accuracy, epoch vs.\ training loss, epoch vs.\ exact affine-region count, and exact affine-region count vs.\ training accuracy.

The exact counts in \Cref{tab:random_counts} indicate that the realized affine-region usage within \(\Omega\) depends strongly on sample size under this protocol. For smaller sample sizes (e.g., \(200\) and \(500\)), the exact region count tends to grow over training and reaches values well above initialization. For larger sample sizes (e.g., \(5{,}000\) and \(10{,}000\)), the exact count drops sharply during training and remains far below the initialization level at later checkpoints. For example, at epoch \(10{,}000\), the reported count is \(452 \pm 7\) for \(5{,}000\) samples and \(68 \pm 6\) for \(10{,}000\) samples, compared with initialization levels near \(1{,}151\)–\(1{,}152\).

The training summaries in \Cref{fig:random_training_metrics_2d} provide complementary context for these exact counts. In the reported runs, larger sample sizes are associated with different optimization trajectories in both training accuracy and training loss, while also exhibiting substantially smaller exact region counts. We therefore interpret the 2D random-label results conservatively: in this controlled setting, increasing sample size is associated with reduced realized affine-region usage within the bounded domain \(\Omega\), with the relationship visible both in exact counts and in the corresponding training-time summaries.

\begin{figure}[t]
    \centering
    \includegraphics[width=0.98\textwidth]{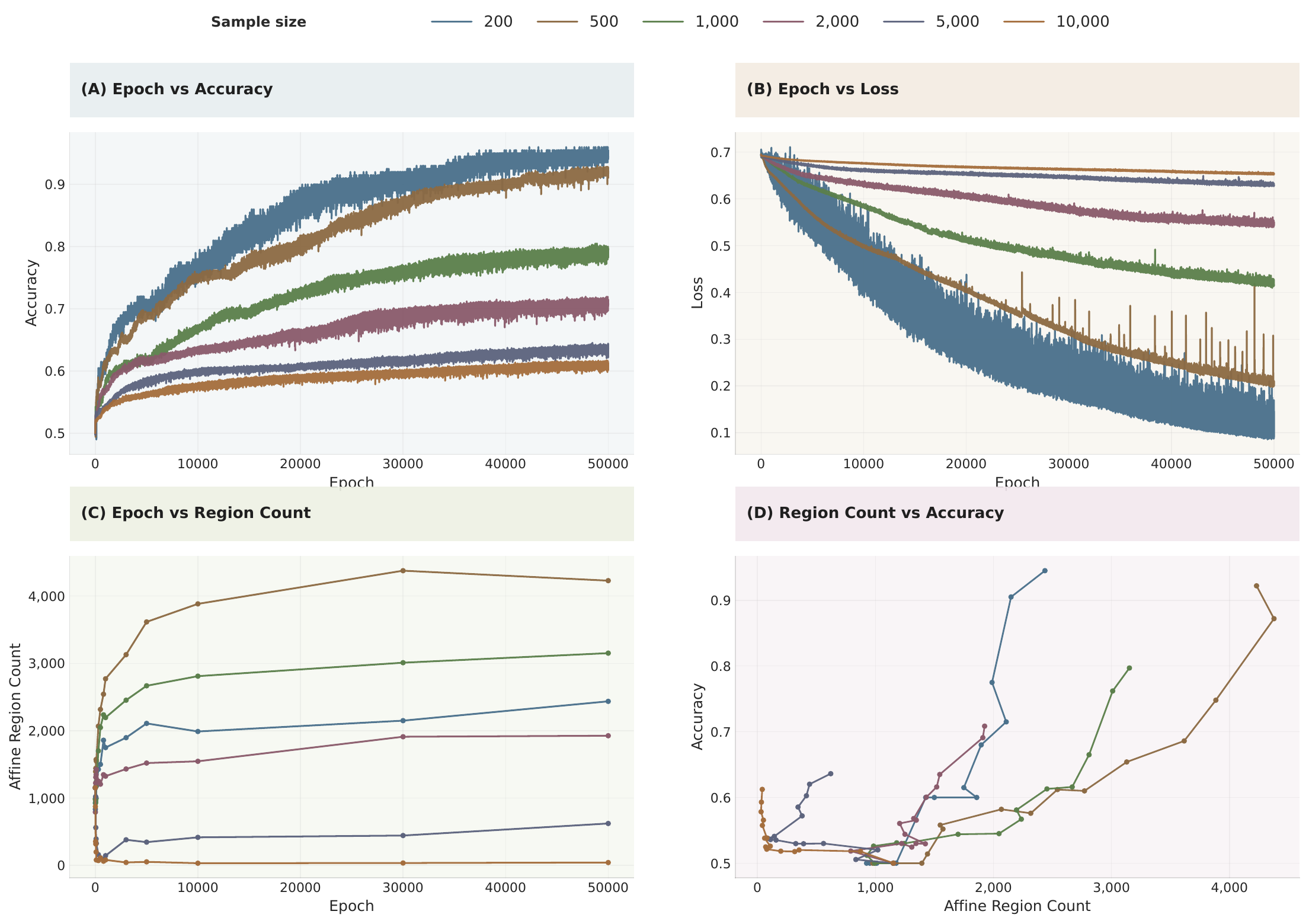}
    \caption{\textbf{Training summaries for the 2D random-label experiment with architecture \([32,32,32]\), aggregated over \(5\) seeds.}
    Panels a--(D) report epoch vs.\ training accuracy, epoch vs.\ training loss, epoch vs.\ exact affine-region count in \(\Omega=[-1,1]^2\), and exact affine-region count vs.\ training accuracy, respectively, across different sample sizes. These plots complement \Cref{tab:random_counts} by summarizing how optimization and exact bounded-domain region counts evolve during training under increasing sample size.}
    \label{fig:random_training_metrics_2d}
\end{figure}

\begin{figure}[t]
    \centering
    \includegraphics[width=0.98\textwidth]{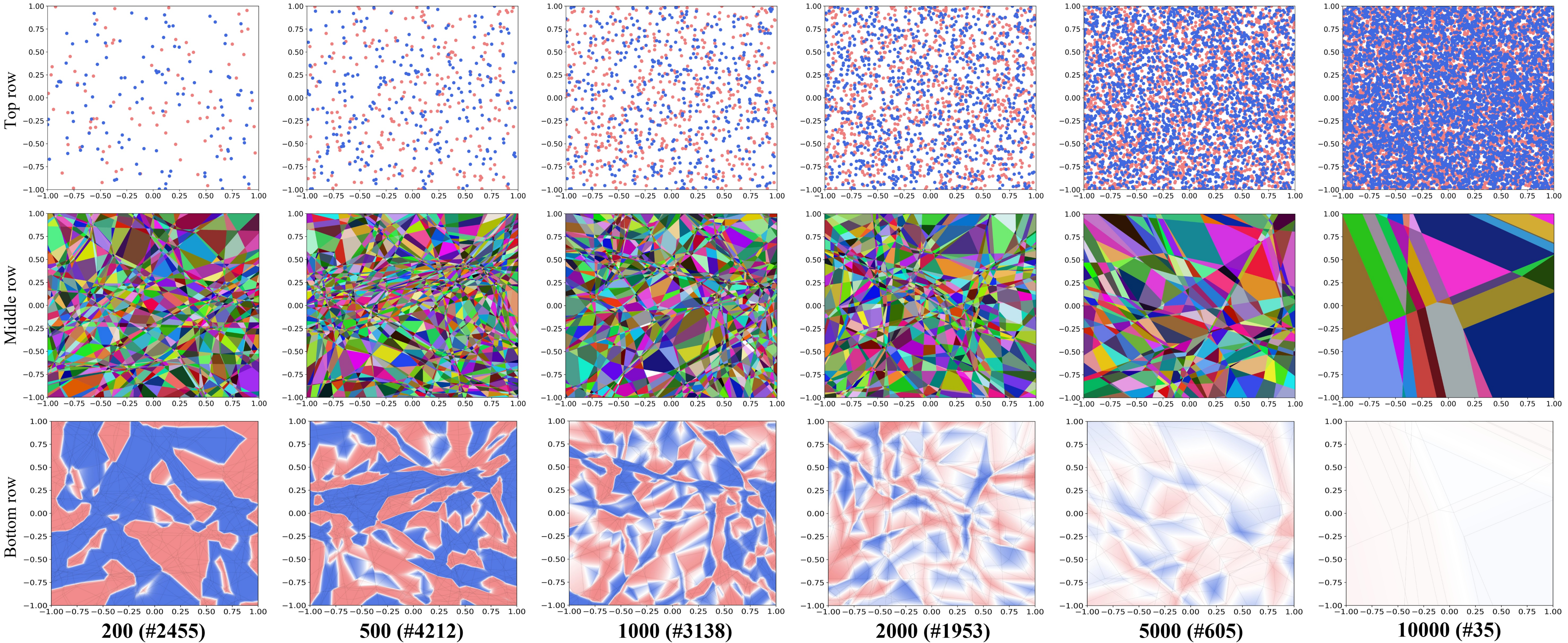}
    \caption{\textbf{Representative exact region and decision-boundary visualizations for random labels with increasing sample size (network \([32,32,32]\)).}
    Top: training data. Middle: \emph{exactly enumerated} affine regions in \(\Omega\). Bottom: decision regions. These visualizations are intended to illustrate typical geometric patterns corresponding to the quantitative summaries in \Cref{tab:random_counts,fig:random_training_metrics_2d}.}
    \label{fig:random_samples_visual}
\end{figure}

\begin{table}[t]
\centering
\caption{\textbf{Exact} 2D affine-region counts \(\lvert \mathcal{R}_{\Omega}(f)\rvert\) in \(\Omega=[-1,1]^2\) for network \([32,32,32]\) on random labels, reported as mean \(\pm\) standard deviation over \(5\) random seeds at selected training epochs.}
\setlength{\tabcolsep}{0.9mm}
\scriptsize
\resizebox{\columnwidth}{!}{%
\begin{tabular}{l|ccccccccccccccc}
\toprule
\diagbox[width=5.2em,trim=l]{Samples}{Epoch} & 0 & 10 & 30 & 50 & 80 & 100 & 300 & 500 & 800 & 1000 & 3000 & 5000 & 10000 & 30000 & 50000 \\
\midrule
200   & \randdelta{1151}{1} & \randdelta{938}{42} & \randdelta{919}{9}  & \randdelta{1047}{31} & \randdelta{968}{24} & \randdelta{1219}{46} & \randdelta{1388}{13} & \randdelta{1531}{37} & \randdelta{1821}{28} & \randdelta{1784}{5}  & \randdelta{1861}{41} & \randdelta{2139}{19} & \randdelta{1950}{34} & \randdelta{2187}{12} & \randdelta{2436}{27} \\
500   & \randdelta{1151}{0} & \randdelta{949}{29} & \randdelta{1437}{44} & \randdelta{1402}{11} & \randdelta{1605}{36} & \randdelta{1511}{22} & \randdelta{2109}{47} & \randdelta{2271}{15} & \randdelta{2578}{33} & \randdelta{2730}{6}  & \randdelta{3164}{25} & \randdelta{3588}{39} & \randdelta{3929}{14} & \randdelta{4341}{48} & \randdelta{4229}{21} \\
1000  & \randdelta{1153}{2} & \randdelta{1030}{7} & \randdelta{891}{35}  & \randdelta{1021}{18} & \randdelta{1146}{43} & \randdelta{1288}{12} & \randdelta{1661}{30} & \randdelta{2082}{4}  & \randdelta{2191}{49} & \randdelta{2233}{23} & \randdelta{2412}{16} & \randdelta{2709}{38} & \randdelta{2774}{10} & \randdelta{3047}{45} & \randdelta{3152}{27} \\
2000  & \randdelta{1154}{3} & \randdelta{824}{40} & \randdelta{1186}{6}  & \randdelta{1349}{32} & \randdelta{1298}{25} & \randdelta{1457}{47} & \randdelta{1218}{9}  & \randdelta{1244}{36} & \randdelta{1301}{20} & \randdelta{1360}{3}  & \randdelta{1394}{44} & \randdelta{1556}{17} & \randdelta{1502}{28} & \randdelta{1946}{11} & \randdelta{1925}{41} \\
5000  & \randdelta{1151}{0} & \randdelta{796}{5}  & \randdelta{1057}{39} & \randdelta{523}{12}  & \randdelta{427}{34}  & \randdelta{289}{8}   & \randdelta{196}{46}  & \randdelta{74}{19}   & \randdelta{123}{27}  & \randdelta{97}{43}   & \randdelta{414}{10}  & \randdelta{301}{31}  & \randdelta{452}{7}   & \randdelta{406}{48}  & \randdelta{621}{24} \\
10000 & \randdelta{1152}{1} & \randdelta{838}{16} & \randdelta{392}{45}  & \randdelta{274}{9}   & \randdelta{234}{29}  & \randdelta{41}{4}    & \randdelta{105}{18}  & \randdelta{63}{4}    & \randdelta{94}{4}    & \randdelta{44}{12}   & \randdelta{77}{8}    & \randdelta{49}{6}    & \randdelta{68}{6}    & \randdelta{46}{7}    & \randdelta{41}{10} \\
\bottomrule
\end{tabular}%
}
\label{tab:random_counts}
\end{table}

\begin{figure}[t]
    \centering
    \includegraphics[width=0.98\textwidth]{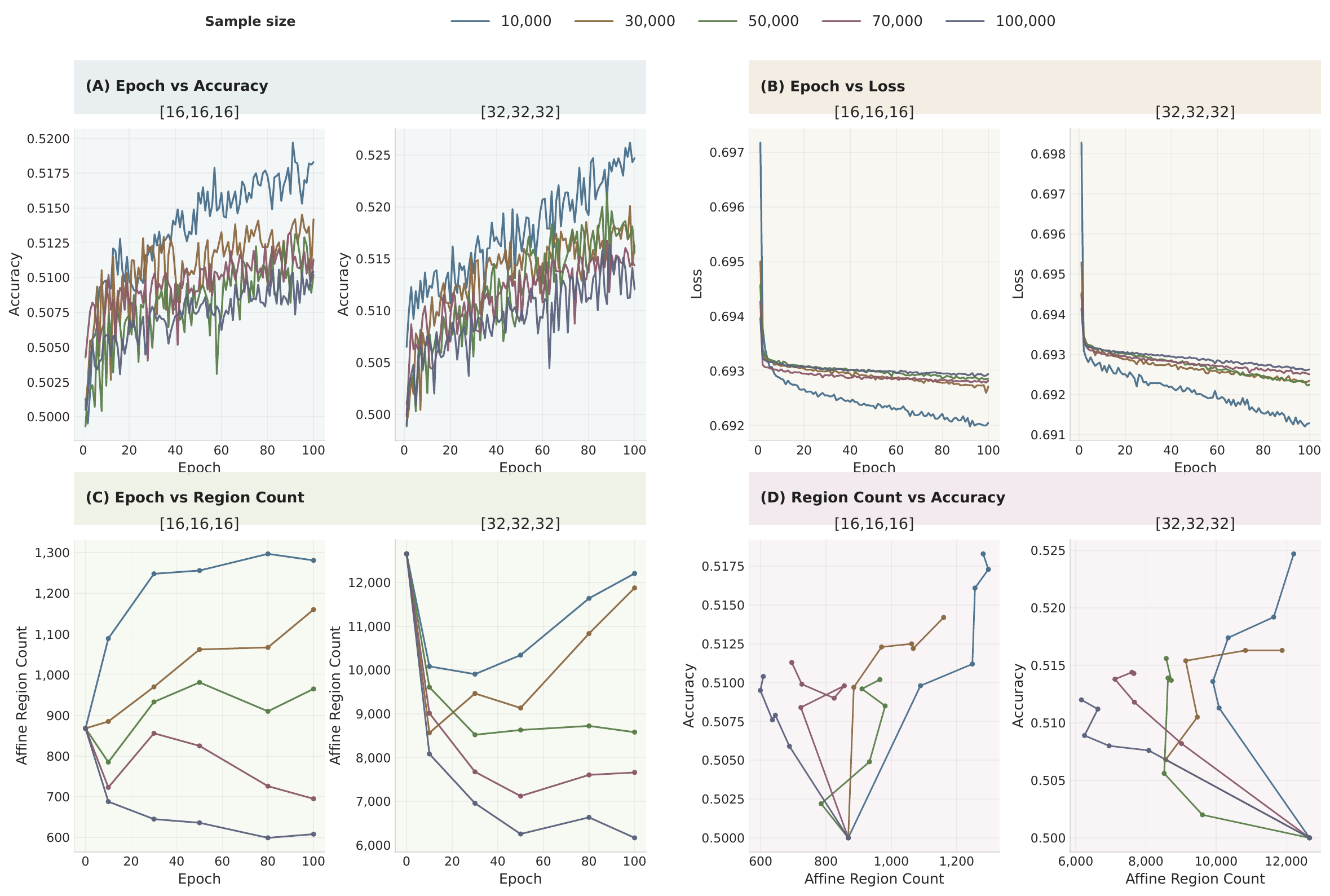}
    \caption{\textbf{Training summaries for the 3D random-label experiments.}
    Left and right subpanels within each block correspond to architectures \([16,16,16]\) and \([32,32,32]\), respectively. Panels a--(D) report epoch vs.\ training accuracy, epoch vs.\ training loss, epoch vs.\ exact affine-region count in \(\Omega=[-1,1]^3\), and exact affine-region count vs.\ training accuracy. These plots complement \Cref{tab:hd_region_counts} by summarizing how optimization and exact bounded-domain region counts evolve across sample sizes and epochs.}
    \label{fig:hd_all_metrics_combined}
\end{figure}

\paragraph{Ablation: initialization and optimizer.}
To check that the observed collapse is not an artifact of a particular training configuration, we repeat the random-label experiment with \(5000\) samples and compare the exact region count at epoch \(10000\) under Xavier initialization and under SGD (keeping the remaining settings the same); the qualitative conclusion is unchanged, as all variants remain in the collapsed regime (see \Cref{tab:random_ablations}).

\begin{table}[t]
\centering
\caption{\textbf{Exact} region counts at epoch \(10000\) for random labels with \(5000\) samples under initialization/optimizer ablations (network \([32,32,32]\), domain \(\Omega=[-1,1]^2\)).}
\label{tab:random_ablations}
\small
\resizebox{\columnwidth}{!}{%
\begin{tabular}{l c}
\toprule
Configuration & Exact \#regions in \(\Omega\) at epoch \(10000\) \\
\midrule
Baseline setting (as in \Cref{tab:random_counts}) & \(452_{\scriptstyle \pm 7}\) \\
Xavier initialization (other settings unchanged) & \(579_{\scriptstyle \pm 12}\) \\
SGD optimizer (other settings unchanged) & \(962_{\scriptstyle \pm 6}\) \\
\bottomrule
\end{tabular}%
}
\end{table}

\begin{figure}
    \centering
    \includegraphics[width=0.98\textwidth]{Fig.8a.pdf}
    \caption{\textbf{Decision-boundary evolution on \texttt{make\_moons}.}
    Network \([16,16,16]\). Top: \emph{exactly enumerated} affine regions in \(\Omega\). Bottom: decision regions. Epochs: $[0,2,5,8,10,20,50,100]$.}
    \label{fig:moons_dynamics_16}
\end{figure}

\paragraph{Decision-boundary evolution versus boundary failure under complex inputs.}
On a more regular, learnable dataset, the decision boundary evolves over training and progressively aligns with the data geometry; \Cref{fig:moons_dynamics_16} illustrates this behavior on \texttt{make\_moons}, where both the exactly enumerated affine regions and the induced decision regions refine throughout training. In contrast, under random labels with \(10000\) samples, \Cref{fig:random_samples_visual} shows that the network fails to form an effective decision boundary, consistent with the concurrent reduction in realized affine regions reported in \Cref{tab:random_counts}.

\subsection{High-dimensional exact enumeration of realized affine regions}
\label{subsec:exp_hd}

We extend exact bounded-domain region enumeration to three-dimensional inputs and report \emph{exact} counts of realized affine regions intersecting a compact domain. Because exact exploration in \(d=3\) is substantially more expensive than in the 2D setting, we focus here on a controlled random-label setup and summarize the configurations for which exact enumeration was completed.

\paragraph{Dataset, domain, architectures, and reporting protocol.}
We consider the random-data-with-random-labels setting in three dimensions (\(d=3\)), where inputs are sampled uniformly from \(\Omega=[-1,1]^3\) and labels are assigned i.i.d.\ at random. We evaluate sample sizes \(10{,}000\), \(30{,}000\), \(50{,}000\), \(70{,}000\), and \(100{,}000\). We report results for fully-connected ReLU MLPs with hidden architectures \([16,16,16]\) and \([32,32,32]\), using the same training protocol as in the 2D experiments. Unless otherwise stated, all quantities in this subsection are aggregated over \(5\) random seeds and reported as mean \(\pm\) standard deviation.

\paragraph{Exact region enumeration across training.}
At each reported epoch, we enumerate \emph{exactly} the set of realized affine regions intersecting \(\Omega\) using Algorithm~\ref{alg:hd_region_explore}. The entries in \Cref{tab:hd_region_counts} therefore equal the exact bounded-domain counts \(\lvert \mathcal{R}_{\Omega}(f)\rvert\) for the corresponding trained network and epoch.

\paragraph{Statistical summaries of optimization and region formation.}
To complement the exact region counts, \Cref{fig:hd_all_metrics_combined} summarizes training accuracy, training loss, affine-region count, and the relationship between affine-region count and training accuracy across epochs and sample sizes. In each block of the figure, the left and right subpanels correspond to architectures \([16,16,16]\) and \([32,32,32]\), respectively. Panels a--(D) report epoch vs.\ accuracy, epoch vs.\ loss, epoch vs.\ affine-region count, and affine-region count vs.\ accuracy.

The exact counts in \Cref{tab:hd_region_counts} show that the 3D region dynamics depend on both architecture and sample size. For \([16,16,16]\), the final exact region counts at epoch \(100\) are above the initialization level for \(10{,}000\), \(30{,}000\), and \(50{,}000\) samples, but below initialization for \(70{,}000\) and \(100{,}000\) samples. For \([32,32,32]\), the final exact region counts decrease as the sample size increases over the reported range, from \(12{,}209 \pm 16\) at \(10{,}000\) samples to \(6{,}163 \pm 20\) at \(100{,}000\) samples.

\begin{table}[t]
\caption{\textbf{Exact} realized affine-region counts \(\lvert \mathcal{R}_{\Omega}(f)\rvert\) in \(\Omega=[-1,1]^3\) at selected training epochs for ReLU MLPs trained on random 3D inputs with random labels, computed by exhaustive bounded-domain exploration (Algorithm~\ref{alg:hd_region_explore}) and reported as mean \(\pm\) standard deviation over \(5\) random seeds.}
\centering
\label{tab:hd_region_counts}
\scriptsize
\setlength{\tabcolsep}{6pt}
\resizebox{\columnwidth}{!}{%
\begin{tabular}{llrrrrrr}
\toprule
Architecture & Samples & Epoch 0 & Epoch 10 & Epoch 30 & Epoch 50 & Epoch 80 & Epoch 100 \\
\midrule
\([16,16,16]\) & 10,000  & \(868_{\scriptstyle \pm 2}\)    & \(1,089_{\scriptstyle \pm 42}\)  & \(1,248_{\scriptstyle \pm 5}\)   & \(1,256_{\scriptstyle \pm 31}\)  & \(1,297_{\scriptstyle \pm 14}\)  & \(1,281_{\scriptstyle \pm 48}\) \\
\([16,16,16]\) & 30,000  & \(867_{\scriptstyle \pm 3}\)     & \(885_{\scriptstyle \pm 27}\)    & \(970_{\scriptstyle \pm 44}\)    & \(1,062_{\scriptstyle \pm 12}\)  & \(1,067_{\scriptstyle \pm 35}\)  & \(1,160_{\scriptstyle \pm 6}\)  \\
\([16,16,16]\) & 50,000  & \(869_{\scriptstyle \pm 0}\)    & \(785_{\scriptstyle \pm 11}\)    & \(933_{\scriptstyle \pm 46}\)    & \(981_{\scriptstyle \pm 20}\)    & \(910_{\scriptstyle \pm 3}\)     & \(965_{\scriptstyle \pm 29}\)   \\
\([16,16,16]\) & 70,000  & \(868_{\scriptstyle \pm 2}\)    & \(723_{\scriptstyle \pm 41}\)    & \(856_{\scriptstyle \pm 8}\)     & \(825_{\scriptstyle \pm 33}\)    & \(726_{\scriptstyle \pm 15}\)    & \(695_{\scriptstyle \pm 47}\)   \\
\([16,16,16]\) & 100,000 & \(868_{\scriptstyle \pm 4}\)     & \(688_{\scriptstyle \pm 36}\)    & \(645_{\scriptstyle \pm 19}\)    & \(636_{\scriptstyle \pm 45}\)    & \(599_{\scriptstyle \pm 7}\)     & \(608_{\scriptstyle \pm 26}\)   \\
\midrule
\([32,32,32]\) & 10,000  & \(12,656_{\scriptstyle \pm 3}\) & \(10,084_{\scriptstyle \pm 13}\) & \(9,906_{\scriptstyle \pm 37}\)  & \(10,342_{\scriptstyle \pm 4}\)  & \(11,639_{\scriptstyle \pm 28}\) & \(12,209_{\scriptstyle \pm 16}\) \\
\([32,32,32]\) & 30,000  & \(12,657_{\scriptstyle \pm 2}\) & \(8,564_{\scriptstyle \pm 49}\)  & \(9,464_{\scriptstyle \pm 10}\)  & \(9,133_{\scriptstyle \pm 34}\)  & \(10,834_{\scriptstyle \pm 18}\) & \(11,879_{\scriptstyle \pm 40}\) \\
\([32,32,32]\) & 50,000  & \(12,658_{\scriptstyle \pm 0}\)  & \(9,609_{\scriptstyle \pm 30}\)  & \(8,520_{\scriptstyle \pm 47}\)  & \(8,629_{\scriptstyle \pm 22}\)  & \(8,722_{\scriptstyle \pm 1}\)   & \(8,579_{\scriptstyle \pm 38}\) \\
\([32,32,32]\) & 70,000  & \(12,657_{\scriptstyle \pm 1}\) & \(9,013_{\scriptstyle \pm 12}\)  & \(7,672_{\scriptstyle \pm 44}\)  & \(7,118_{\scriptstyle \pm 9}\)   & \(7,602_{\scriptstyle \pm 31}\)  & \(7,658_{\scriptstyle \pm 5}\)  \\
\([32,32,32]\) & 100,000 & \(12,657_{\scriptstyle \pm 0}\) & \(8,082_{\scriptstyle \pm 14}\)  & \(6,953_{\scriptstyle \pm 27}\)  & \(6,251_{\scriptstyle \pm 3}\)   & \(6,631_{\scriptstyle \pm 41}\)  & \(6,163_{\scriptstyle \pm 20}\) \\
\bottomrule
\end{tabular}%
}
\end{table}

\section{Discussion}
\label{sec:discussion}

\paragraph{Practical implications and scope.}
These results suggest that realized affine-region usage may provide a useful perspective, beyond nominal worst-case capacity, when interpreting architecture and training behavior in controlled settings.
In particular, our experiments show that, under a fixed architecture and training protocol, increasing task complexity need not translate into richer realized partitioning of the evaluation domain. A practical implication is therefore interpretive rather than prescriptive: nominal expressive capacity alone may be an incomplete proxy for the nonlinear structure that is actually realized after training.
From this perspective, region-usage statistics may be informative when comparing architectures or training settings in controlled regimes, especially when two models have similar nominal capacity but exhibit different realized geometric complexity after optimization.

At the same time, the present study is intentionally confined to fully-connected CPWA networks in low-dimensional settings where exact counting is feasible. Extending this perspective to more complex architectures, models with non-piecewise-affine activations, and real-world datasets remains a primary direction for future work.

\paragraph{Relation to existing empirical work on region utilization.}
Our work is complementary to a broader literature, including empirical studies and theory-grounded analyses, on how neural networks utilize their piecewise-affine structure.
\citet{book53} study the complexity of affine regions in deep networks and show that practical affine-region complexity can remain far below worst-case bounds, including along one-dimensional subspaces.
\citet{book26} study local geometric properties of affine regions in trained networks and show that different optimization techniques can yield substantially different affine-region structures even when test accuracy is similar.
\citet{book27} examine how networks utilize affine regions for generalization by extracting data-dependent linear terms, and report substantial architectural differences in this utilization.
\citet{book16} analyze region densities in deep reinforcement learning and find only moderate growth in observed complexity during training.
\citet{book56} provide an empirical comparison of the realized numbers of affine regions in ReLU and LeakyReLU networks, highlighting how the choice of activation function can affect practically expressed piecewise-affine complexity.
\citet{book57} study the effect of residual connections on the realized expressiveness of linear regions and report empirical differences in region utilization between residual and non-residual architectures.
More recently, \citet{book50} formalize \emph{local complexity} as a data-distribution-weighted density of affine regions and connect it to optimization and feature learning.
At the same time, \citet{book54} argue that region density alone may fail to capture effective nonlinearity in modern overparameterized ReLU networks.

Compared with these studies, our paper examines realized affine-region usage under increasing task complexity for a fixed architecture and training protocol. Our empirical analysis is based on exact bounded-domain enumeration in 2D and higher dimensions, together with decision-boundary visualizations and training-time snapshots. This allows us to study how realized region usage and boundary geometry evolve in the controlled settings considered here. The empirical analysis is also paired with a rigorous 1D probe-level theorem and its neuron-threshold implication.

Our results are complementary to prior studies on region utilization. In the present setting, they highlight reduced realized affine-region usage under increasing task complexity and provide another view of the gap between expressive capacity and expressive utilization.

\section{Limitations and future work}
Our theoretical results are probe-based and do not directly characterize full-dimensional region counts or the geometry of regions away from one-dimensional trajectories. Exact bounded-domain enumeration, while exact, is computationally constrained and currently most practical in low dimensions and for moderate architectures. This limitation is not merely algorithmic. Exact affine-region enumeration for realistic complex structures and real-world datasets is closely related to a computationally intractable counting problem in general: recent complexity-theoretic results for ReLU networks show NP- and \#P-hardness already in shallow settings, together with strong hardness-of-approximation results for deeper networks~\citep{book55}. Accordingly, the empirical conclusions of the present paper should be interpreted within the controlled low-dimensional settings where exact bounded-domain enumeration remains feasible.

Empirically, we focused on fully-connected ReLU MLPs and used random-label regimes as a controlled proxy for extreme task complexity. This design makes it possible to study the gap between expressive capacity and realized affine-region usage under fixed architectures and training protocols, but it does not by itself establish how the same phenomena manifest in more structured data or in real-world applications. Extending the analysis to broader architectures, including convolutional, residual, and attention-based models, as well as to more structured notions of data complexity, remains an important direction for future work.

More broadly, several directions remain open. These include developing theory for region formation under specific training dynamics, clarifying how realized affine-region usage relates to known optimization phenomena, designing regularization or curriculum strategies that stabilize region refinement under complex supervision, and developing exact, approximate, or certified methods that can scale to data manifolds and higher-dimensional domains.

\section{Conclusion}
\label{sec:conclusion}

We study the gap between affine-region capacity and the regions realized after training in piecewise-affine networks. We prove an architecture-dependent upper bound on affine pieces along any affine line-segment probe, yielding a necessary neuron threshold for 1D piece complexity. In the controlled low-dimensional fully-connected ReLU MLP settings studied here, exact 1D counting and bounded-domain enumeration in 2D and 3D provide evidence of \emph{expressivity saturation}: increasing task complexity in our random-label/sample-size regimes is associated with fewer realized regions and, in 2D, degraded decision boundaries.

\bibliography{main}
\bibliographystyle{tmlr}

\clearpage
\appendix

\section{Proofs}
\label{app:proofs}

\subsection{
  \texorpdfstring
    {Proof of \Cref{thm:line_upper_bound}}
    {Proof of Theorem~\ref{thm:line_upper_bound}}
}
\label{app:proofs:line_upper}

Throughout this subsection, we consider a \emph{line-segment probe}
\begin{equation}
\gamma(t)\;=\;x_0+t\,v,\qquad t\in[0,1],
\label{eq:app_gamma_affine}
\end{equation}
for some $x_0,v\in\R^d$. This is the standard setting for line-restricted region counting.

\paragraph{CPWA activation.}
Let $\sigma:\R\to\R$ be CPWA with (finite) breakpoint set
\begin{equation}
-\infty=t_0<t_1<\cdots<t_K<t_{K+1}=+\infty,
\label{eq:app_breakpoints}
\end{equation}
such that for each $k\in\{1,\dots,K+1\}$ there exist scalars $c_k,d_k\in\R$ with
\begin{equation}
\sigma(z)\;=\;c_k z+d_k,\qquad z\in(t_{k-1},t_k).
\label{eq:app_sigma_piece}
\end{equation}

\paragraph{Network notation.}
Define the hidden layers recursively by
\begin{align}
h_0(x) &:= x,
\label{eq:app_h0}\\
h_\ell(x) &:= \sigma\!\bigl(W_\ell h_{\ell-1}(x)+b_\ell\bigr),\qquad \ell=1,\dots,L,
\label{eq:app_hl}\\
f(x) &:= W_{L+1}h_L(x)+b_{L+1},
\label{eq:app_f}
\end{align}
where $\sigma$ acts coordinate-wise.

\paragraph{Piece count along the probe.}
For each $\ell\in\{0,1,\dots,L\}$, let $R_\ell$ be the smallest integer such that there exists a partition
\begin{equation}
0=\tau_0<\tau_1<\cdots<\tau_{R_\ell}=1
\label{eq:app_partition}
\end{equation}
for which $t\mapsto h_\ell(\gamma(t))$ is affine on every open interval $(\tau_{r-1},\tau_r)$.
Since the output layer is affine in $h_L$, any partition witnessing affine behavior for $h_L\circ\gamma$
also witnesses affine behavior for $f\circ\gamma$, hence
\begin{equation}
\lr(f;\gamma)\;\le\;R_L.
\label{eq:app_lr_le_RL}
\end{equation}

We now prove a layer-wise refinement bound.

\begin{lemma}[Affine pre-activations on each incoming affine piece]
\label{lem:app_affine_preact}
Fix $\ell\in\{1,\dots,L\}$ and an open interval $I\subset(0,1)$. If $t\mapsto h_{\ell-1}(\gamma(t))$ is affine on $I$,
then for each neuron $j\in\{1,\dots,n_\ell\}$ the pre-activation
\begin{equation}
a_{\ell,j}(t)\;:=\;e_j^\top\!\bigl(W_\ell h_{\ell-1}(\gamma(t))+b_\ell\bigr)
\label{eq:app_a_lj_def}
\end{equation}
is an affine function of $t$ on $I$.
\end{lemma}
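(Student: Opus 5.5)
The plan is to unfold the hypothesis and observe that the pre-activation is an affine map applied to an affine function of $t$. Since $t\mapsto h_{\ell-1}(\gamma(t))$ is affine on $I$, there exist vectors $p,q\in\R^{n_{\ell-1}}$ such that
\[
h_{\ell-1}(\gamma(t)) = p + t\,q,\qquad t\in I.
\]
Here affine means of the form constant plus $t$ times a constant vector on $I$. For $\ell=1$ this holds on all of $(0,1)$ directly from $h_0(\gamma(t))=x_0+t\,v$, with $p=x_0$ and $q=v$.

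Substituting into the definition of $a_{\ell,j}$, I will use linearity of $W_\ell$ and of $e_j^\top$:
\[
a_{\ell,j}(t)=e_j^\top\bigl(W_\ell p+b_\ell\bigr)+t\,e_j^\top W_\ell q, \qquad t\in I.
\]
This is $\alpha_j+\beta_j t$ with scalars $\alpha_j:=e_j^\top(W_\ell p+b_\ell)$ and $\beta_j:=e_j^\top W_\ell q$, which are independent of $t$. It is therefore affine on $I$, and the argument applies to every $j\in\{1,\dots,n_\ell\}$.

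There is no real obstacle here. The only point needing care is fixing what \emph{affine on $I$} means, namely a single representation $p+tq$ valid throughout the open interval $I$, so that the composition yields a single pair $(\alpha_j,\beta_j)$ on all of $I$. I will also note, for later use in the layer-wise refinement step, that $\beta_j$ may vanish. In that case $a_{\ell,j}$ is constant on $I$ and meets each breakpoint $t_k$ either nowhere or on all of $I$; in neither case does it create an interior split point. This consequence is what the subsequent counting argument will rely on.
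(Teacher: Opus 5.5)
Your proposal is correct and is essentially the paper's proof: write $h_{\ell-1}(\gamma(t))=p+t\,q$ on $I$ and substitute to get $a_{\ell,j}(t)=e_j^\top(W_\ell p+b_\ell)+t\,e_j^\top W_\ell q$. Your remark on the case $\beta_j=0$ is also correct, and it is slightly more careful than the paper's next lemma, where the set of breakpoint hits would be all of $I$ if $a_{\ell,j}\equiv t_k$.
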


\begin{proof}
If $h_{\ell-1}(\gamma(t))$ is affine on $I$, then there exist $u,v\in\R^{n_{\ell-1}}$ such that
\begin{equation}
h_{\ell-1}(\gamma(t))\;=\;u+t\,v,\qquad t\in I.
\label{eq:app_affine_rep_prev}
\end{equation}
Substituting \eqref{eq:app_affine_rep_prev} into \eqref{eq:app_a_lj_def} yields
\begin{equation}
a_{\ell,j}(t)\;=\;e_j^\top(W_\ell u+b_\ell)\;+\;t\,e_j^\top(W_\ell v),
\label{eq:app_a_lj_affine}
\end{equation}
which is affine in $t$ on $I$.
\end{proof}

\begin{lemma}[A CPWA scalar neuron introduces at most $K$ split points on an affine input]
\label{lem:app_single_neuron}
Let $I\subset\R$ be a nonempty open interval and let $a:I\to\R$ be affine. Then $\sigma\circ a$ is piecewise-affine on $I$
and there exists a partition of $I$ into at most $K+1$ open subintervals on each of which $\sigma\circ a$ is affine.
Equivalently, the affine formula of $\sigma\circ a$ changes at most $K$ times on $I$.
\end{lemma}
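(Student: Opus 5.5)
Write the affine input as $a(t)=\alpha+\beta t$ on $I$ and split into two cases according to whether $\beta=0$.

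If $\beta=0$, then $a$ is constant, say $a\equiv\alpha$. Hence $\sigma\circ a\equiv\sigma(\alpha)$ is constant, and therefore affine, on all of $I$. This gives the trivial partition of $I$ into a single subinterval, so no split points are needed.

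If $\beta\neq 0$, then $a$ is strictly monotone and continuous on $I$. For each breakpoint $t_k$, $k=1,\dots,K$, the equation $a(t)=t_k$ has the unique solution $s_k:=(t_k-\alpha)/\beta\in\R$. I will keep only those $s_k$ lying in $I$. Because $a$ is monotone and the $t_k$ are strictly increasing, the retained values are distinct. Sort them as $s_{(1)}<\dots<s_{(p)}$ with $p\le K$.

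These points cut $I$ into $p+1\le K+1$ open subintervals $J_0,\dots,J_p$, ignoring the finitely many cut points themselves. This matches the paper's convention of requiring affineness only on open intervals between partition points.

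The key step is to show that $\sigma\circ a$ is affine on each $J_i$. By construction, $a(J_i)$ is a connected set, namely an open interval, since $a$ is a continuous monotone image of an open interval. It contains no breakpoint $t_k$, because any $t\in J_i$ with $a(t)=t_k$ would equal some retained $s_k$. A connected subset of $\R\setminus\{t_1,\dots,t_K\}$ lies inside a single interval $(t_{k-1},t_k)$. On that interval, \eqref{eq:app_sigma_piece} gives $\sigma(z)=c_kz+d_k$. Consequently
\[
\sigma(a(t))=c_k(\alpha+\beta t)+d_k=(c_k\alpha+d_k)+c_k\beta\,t,\qquad t\in J_i,
\]
which is affine in $t$.

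Thus $I$ is partitioned into at most $K+1$ open subintervals with at most $K$ split points, and $\sigma\circ a$ is affine on each. This establishes both piecewise-affineness and the equivalent \emph{at most $K$ changes} formulation.

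The only genuinely delicate point is the connectedness argument that each $J_i$ maps into a single activation piece; everything else is routine bookkeeping. By continuity of $\sigma$, the function is also continuous across the split points, although the lemma does not require this.
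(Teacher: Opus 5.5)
Your proof is correct and follows essentially the same route as the paper's: you cut $I$ at the at most $K$ preimages of the breakpoints, then use connectedness of each remaining subinterval's image to place it inside a single activation piece $(t_{k-1},t_k)$. Your separate, terminal treatment of the constant case is slightly cleaner than the paper's. The paper's claim $|\mathcal{Z}|\le K$ fails literally when $a\equiv t_k$, since then $\mathcal{Z}=I$, although its conclusion is unaffected.
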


\begin{proof}
Write $a(t)=\alpha t+\beta$ on $I$. For each breakpoint $t_k$ in \eqref{eq:app_breakpoints},
consider the equation
\begin{equation}
a(t)\;=\;t_k.
\label{eq:app_eq_break}
\end{equation}
If $\alpha\neq 0$, then \eqref{eq:app_eq_break} has at most one solution in $I$ for each $k\in\{1,\dots,K\}$.
If $\alpha=0$, then $a$ is constant on $I$ and $\sigma\circ a$ is constant (hence affine) on $I$.
Define the set of potential split points
\begin{equation}
\mathcal{Z}\;:=\;\bigl\{t\in I:\ a(t)\in\{t_1,\dots,t_K\}\bigr\}.
\label{eq:app_Z_def}
\end{equation}
From the above discussion,
\begin{equation}
|\mathcal{Z}|\;\le\;K.
\label{eq:app_Z_card}
\end{equation}
On any connected component of $I\setminus\mathcal{Z}$, the value $a(t)$ lies within a single interval $(t_{k-1},t_k)$,
hence $\sigma$ follows a fixed affine rule \eqref{eq:app_sigma_piece} and $\sigma\circ a$ is affine there.
Removing at most $K$ points from an open interval produces at most $K+1$ open components, proving the claim.
\end{proof}

\begin{lemma}[One layer of width $n_\ell$ refines each incoming affine piece by at most $Kn_\ell$ splits]
\label{lem:app_layer_refine}
Fix $\ell\in\{1,\dots,L\}$ and an open interval $I\subset(0,1)$. If $t\mapsto h_{\ell-1}(\gamma(t))$ is affine on $I$,
then there exists a partition of $I$ into at most $Kn_\ell+1$ open subintervals on each of which
$t\mapsto h_{\ell}(\gamma(t))$ is affine.
\end{lemma}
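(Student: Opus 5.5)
The plan is to combine \Cref{lem:app_affine_preact} and \Cref{lem:app_single_neuron} neuron by neuron, and then take the common refinement of the per-neuron partitions. Assume $t\mapsto h_{\ell-1}(\gamma(t))$ is affine on $I$. By \Cref{lem:app_affine_preact}, each pre-activation $a_{\ell,j}$ for $j=1,\dots,n_\ell$ is affine on $I$. For each $j$, we then form the split set
\[
\mathcal{Z}_j:=\bigl\{t\in I:\ a_{\ell,j}(t)\in\{t_1,\dots,t_K\}\bigr\}.
\]
The proof of \Cref{lem:app_single_neuron} shows that $|\mathcal{Z}_j|\le K$. This holds trivially when $a_{\ell,j}$ is constant: either $\mathcal{Z}_j=\emptyset$, or $\mathcal{Z}_j=I$ if the constant equals a breakpoint. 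The second case needs care, discussed below.

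Next we set $\mathcal{Z}:=\bigcup_{j=1}^{n_\ell}\mathcal{Z}_j$, which gives $|\mathcal{Z}|\le Kn_\ell$ by the union bound. Removing at most $Kn_\ell$ points from the open interval $I$ leaves at most $Kn_\ell+1$ open connected components. On any such component $J$, every neuron $j$ avoids its breakpoints. Since $a_{\ell,j}$ is continuous and $J$ is connected, $a_{\ell,j}(J)$ lies in a single open interval $(t_{k_j-1},t_{k_j})$. On that interval $\sigma$ equals $c_{k_j}z+d_{k_j}$, so the $j$-th coordinate of $h_\ell(\gamma(t))$ equals $c_{k_j}a_{\ell,j}(t)+d_{k_j}$, which is affine on $J$. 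A vector map whose coordinates are all affine on $J$ is affine on $J$, which proves the claim.

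The only delicate point, and the main obstacle, is the degenerate case in which some $a_{\ell,j}$ is constant and equal to a breakpoint $t_k$ on $I$. In that case $\mathcal{Z}_j$ is not finite, but the $j$-th output $\sigma(t_k)$ is still constant. I would therefore redefine $\mathcal{Z}_j:=\emptyset$ whenever $a_{\ell,j}$ is constant on $I$, and handle those coordinates separately: they are constant, hence affine, on all of $I$. For the non-constant neurons, $a_{\ell,j}$ is strictly monotone, so it hits each $t_k$ at most once and $|\mathcal{Z}_j|\le K$ holds as stated. With this case split, the counting argument goes through unchanged.
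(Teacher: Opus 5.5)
Your proposal is correct and follows the paper's proof essentially step for step. Both use per-neuron split sets $\mathcal{Z}_j$ of size at most $K$ from Lemma~\ref{lem:app_single_neuron}, take their union of size at most $Kn_\ell$, and note that every coordinate follows a fixed affine branch on each of the at most $Kn_\ell+1$ components (the paper writes this as $C(W_\ell h_{\ell-1}+b_\ell)+d$ with diagonal $C$). Your explicit handling of a neuron that is constant and equal to a breakpoint is a welcome bit of extra care: the paper's proof of Lemma~\ref{lem:app_single_neuron} takes $\mathcal{Z}$ to be all breakpoint hits, which would give $\mathcal{Z}=I$ in that case, although the layer lemma itself only needs that lemma's (true) existence statement.
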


\begin{proof}
By Lemma~\ref{lem:app_affine_preact}, each pre-activation $a_{\ell,j}$ is affine on $I$.
By Lemma~\ref{lem:app_single_neuron}, for each $j$ there is a set $\mathcal{Z}_j\subset I$ with
\begin{equation}
|\mathcal{Z}_j|\;\le\;K
\label{eq:app_Zj_card}
\end{equation}
such that $\sigma\circ a_{\ell,j}$ is affine on every connected component of $I\setminus\mathcal{Z}_j$.
Let
\begin{equation}
\mathcal{Z}\;:=\;\bigcup_{j=1}^{n_\ell}\mathcal{Z}_j.
\label{eq:app_Z_union}
\end{equation}
Then
\begin{equation}
|\mathcal{Z}|\;\le\;\sum_{j=1}^{n_\ell}|\mathcal{Z}_j|\;\le\;Kn_\ell.
\label{eq:app_Z_union_card}
\end{equation}
On any connected component of $I\setminus\mathcal{Z}$, every coordinate $\sigma(a_{\ell,j}(t))$ follows a fixed affine branch,
so there exist a diagonal matrix $C\in\R^{n_\ell\times n_\ell}$ and a vector $d\in\R^{n_\ell}$ (constant on that component) with
\begin{equation}
\sigma\!\bigl(W_\ell h_{\ell-1}(\gamma(t))+b_\ell\bigr)\;=\;C\bigl(W_\ell h_{\ell-1}(\gamma(t))+b_\ell\bigr)+d.
\label{eq:app_layer_affine_form}
\end{equation}
Since $h_{\ell-1}(\gamma(t))$ is affine on $I$, the right-hand side of \eqref{eq:app_layer_affine_form} is affine on that component.
Finally, removing at most $Kn_\ell$ points from an open interval yields at most $Kn_\ell+1$ open components, completing the proof.
\end{proof}

\begin{proof}[Proof of \Cref{thm:line_upper_bound}]
We prove by induction that
\begin{equation}
R_\ell\;\le\;\prod_{s=1}^{\ell}(Kn_s+1),\qquad \ell\in\{0,1,\dots,L\},
\label{eq:app_induction_goal}
\end{equation}
with the convention that the empty product equals $1$.

\paragraph{Base case.}
By \eqref{eq:app_gamma_affine} and \eqref{eq:app_h0}, the map $t\mapsto h_0(\gamma(t))$ is affine on $(0,1)$, hence
\begin{equation}
R_0\;=\;1,
\label{eq:app_R0}
\end{equation}
which matches \eqref{eq:app_induction_goal} for $\ell=0$.

\paragraph{Inductive step.}
Fix $\ell\in\{1,\dots,L\}$ and assume \eqref{eq:app_induction_goal} holds for $\ell-1$.
Let $[0,1]$ be partitioned into $R_{\ell-1}$ intervals as in \eqref{eq:app_partition} such that $h_{\ell-1}\circ\gamma$
is affine on each open interval. On each such open interval, Lemma~\ref{lem:app_layer_refine} gives a refinement
into at most $Kn_\ell+1$ open subintervals on which $h_\ell\circ\gamma$ is affine. Therefore
\begin{equation}
R_\ell\;\le\;(Kn_\ell+1)\,R_{\ell-1}.
\label{eq:app_R_rec}
\end{equation}
Combining \eqref{eq:app_R_rec} with the inductive hypothesis yields \eqref{eq:app_induction_goal} for $\ell$.

\paragraph{Conclusion.}
Taking $\ell=L$ in \eqref{eq:app_induction_goal} gives
\begin{equation}
R_L\;\le\;\prod_{\ell=1}^{L}(Kn_\ell+1).
\label{eq:app_RL_prod}
\end{equation}
Using \eqref{eq:app_lr_le_RL}, we obtain
\begin{equation}
\lr(f;\gamma)\;\le\;\prod_{\ell=1}^{L}(Kn_\ell+1),
\label{eq:app_thm_claim}
\end{equation}
which is the desired bound.
\end{proof}

\subsection{
  \texorpdfstring
    {Proof of \Cref{cor:neuron_threshold}}
    {Proof of Corollary~\ref{cor:neuron_threshold}}
}
\label{app:proofs:cor_threshold}

\begin{proof}
Assume $f(\gamma(s))=g(s)$ for all $s\in[0,1]$. By definition of $Q(g)$, any partition of $[0,1]$ into intervals
on which $g$ is affine must have at least $Q(g)$ intervals. Since $g=f\circ\gamma$, any partition witnessing $\lr(f;\gamma)$
also witnesses that $g$ is affine on each piece. Hence
\begin{equation}
Q(g)\;\le\;\lr(f;\gamma).
\label{eq:app_Q_le_lr}
\end{equation}
Applying \Cref{thm:line_upper_bound} yields
\begin{equation}
\lr(f;\gamma)\;\le\;\prod_{\ell=1}^{L}(Kn_\ell+1),
\label{eq:app_lr_le_prod}
\end{equation}
and therefore
\begin{equation}
\prod_{\ell=1}^{L}(Kn_\ell+1)\;\ge\;Q(g).
\label{eq:app_prod_ge_Q}
\end{equation}

Let $N:=\sum_{\ell=1}^{L}n_\ell$. By AM--GM applied to the positive numbers $(Kn_\ell+1)_{\ell=1}^L$,
\begin{equation}
\prod_{\ell=1}^{L}(Kn_\ell+1)\;\le\;\left(\frac{1}{L}\sum_{\ell=1}^{L}(Kn_\ell+1)\right)^{L}.
\label{eq:app_amgm}
\end{equation}
Compute the mean:
\begin{align}
\frac{1}{L}\sum_{\ell=1}^{L}(Kn_\ell+1)
&=\frac{1}{L}\left(K\sum_{\ell=1}^{L}n_\ell + \sum_{\ell=1}^{L}1\right),
\label{eq:app_mean_1}\\
&=\frac{1}{L}(KN+L),
\label{eq:app_mean_2}\\
&=1+\frac{KN}{L}.
\label{eq:app_mean_3}
\end{align}
Combining \eqref{eq:app_prod_ge_Q}, \eqref{eq:app_amgm}, and \eqref{eq:app_mean_3} gives
\begin{equation}
Q(g)\;\le\;\left(1+\frac{KN}{L}\right)^{L}.
\label{eq:app_Q_le_meanpow}
\end{equation}
Taking $L$-th roots and rearranging yields
\begin{equation}
N\;\ge\;\frac{L}{K}\bigl(Q(g)^{1/L}-1\bigr).
\label{eq:app_N_lower}
\end{equation}

If $n_\ell=w$ for all $\ell$, then \eqref{eq:app_prod_ge_Q} becomes
\begin{equation}
(Kw+1)^{L}\;\ge\;Q(g),
\label{eq:app_equal_width}
\end{equation}
equivalently
\begin{equation}
w\;\ge\;\frac{Q(g)^{1/L}-1}{K}.
\label{eq:app_w_real}
\end{equation}
Since $w$ is an integer, we obtain
\begin{equation}
w\;\ge\;\left\lceil\frac{Q(g)^{1/L}-1}{K}\right\rceil,
\label{eq:app_w_ceiling}
\end{equation}
as claimed.
\end{proof}

\clearpage
\section{Residual-Connection Ablation under Complex 2D Inputs}
\label{app:residuals}

This appendix reports an additional ablation for the 2D random-label setting in which the baseline MLP is augmented with a single residual connection. Under the same data and training protocol as in the main 2D experiments, we compare exact bounded-domain affine-region counts and summarize the corresponding optimization trajectories.

\subsection{Setup}
\label{app:residuals:setup}

\paragraph{Data and task.}
We consider 2D random inputs with random labels, with inputs normalized to the same bounded domain as in the main text, \(\Omega=[-1,1]^2\). We sweep the number of training samples over
\[
N_{\mathrm{data}}\in\{200,500,1000,2000,3000,5000,7000,10000\},
\]
and train using cross-entropy loss for classification, matching the main experimental protocol.

\paragraph{Optimization and reporting protocol.}
All runs use Adam, batch size \(32\), and are trained for \(10000\) epochs. Unless otherwise stated, all remaining hyperparameters and initialization follow \Cref{subsec:exp_setup} exactly. To quantify variability, all scalar summaries in this appendix are aggregated over \(5\) random seeds and reported as mean \(\pm\) standard deviation.

\paragraph{Architectures with a single residual connection.}
We consider three depth-\(L=3\) ReLU MLP backbones with widths \([16,16,16]\), \([32,32,32]\), and \([64,64,64]\), and augment each with one residual connection. Concretely, denoting the hidden representations by
\begin{align}
h_1(x) &= \sigma(W_1 x+b_1),
\label{eq:app_res_h1}\\
h_2(x) &= \sigma(W_2 h_1(x)+b_2),
\label{eq:app_res_h2}\\
\tilde{h}_3(x) &= \sigma(W_3 h_2(x)+b_3),
\label{eq:app_res_h3pre}
\end{align}
we define the residualized third hidden layer as
\begin{equation}
h_3(x)\;=\;\tilde{h}_3(x)+h_1(x),
\label{eq:app_res_skip}
\end{equation}
i.e., a single identity skip from the first to the third hidden layer. The output layer remains linear:
\begin{equation}
f(x)=W_4 h_3(x)+b_4.
\label{eq:app_res_out}
\end{equation}
This construction preserves dimensional compatibility because all hidden layers have equal width within each backbone.

\paragraph{Exact region counting and training summaries.}
As in the main text, the reported region numbers are \emph{exact} counts of realized affine regions intersecting \(\Omega\), computed via exhaustive bounded-domain region exploration (Algorithm~\ref{alg:hd_region_explore}). For each configuration, \Cref{tab:residual_counts} reports the exact region count in \(\Omega\) after training to epoch \(10000\). In addition, \Cref{fig:residual_acc_loss_epochs} summarizes the corresponding optimization trajectories by plotting epoch vs.\ training accuracy and epoch vs.\ training loss across sample sizes.

\subsection{Results}
\label{app:residuals:results}

\paragraph{Exact realized region counts.}
\Cref{tab:residual_counts} reports the exact number of affine regions intersecting \(\Omega=[-1,1]^2\) for each residualized architecture and sample size at epoch \(10000\), aggregated over \(5\) seeds. In the reported runs, the final exact region count depends on both architecture width and sample size. For all three residualized architectures, the counts are comparatively larger at smaller sample sizes and substantially smaller at the largest sample sizes considered here. At \(10000\) samples, for example, the exact counts are \(48\pm7\), \(115\pm12\), and \(98\pm15\) for \([16,16,16]\), \([32,32,32]\), and \([64,64,64]\) with a residual connection, respectively.

\paragraph{Training accuracy and loss trajectories.}
\Cref{fig:residual_acc_loss_epochs} complements the exact region counts by reporting training accuracy and training loss over epochs for the same residualized models across different sample sizes. These plots provide optimization context for the final exact region counts in \Cref{tab:residual_counts} and show that the optimization trajectories vary across both sample size and architecture. In this controlled random-label setting and under the reported protocol, adding a single residual connection therefore does not by itself remove the tendency toward smaller final realized affine-region counts at larger sample sizes within the bounded domain \(\Omega\).

\begin{figure*}[t]
    \centering
    \includegraphics[width=0.98\textwidth]{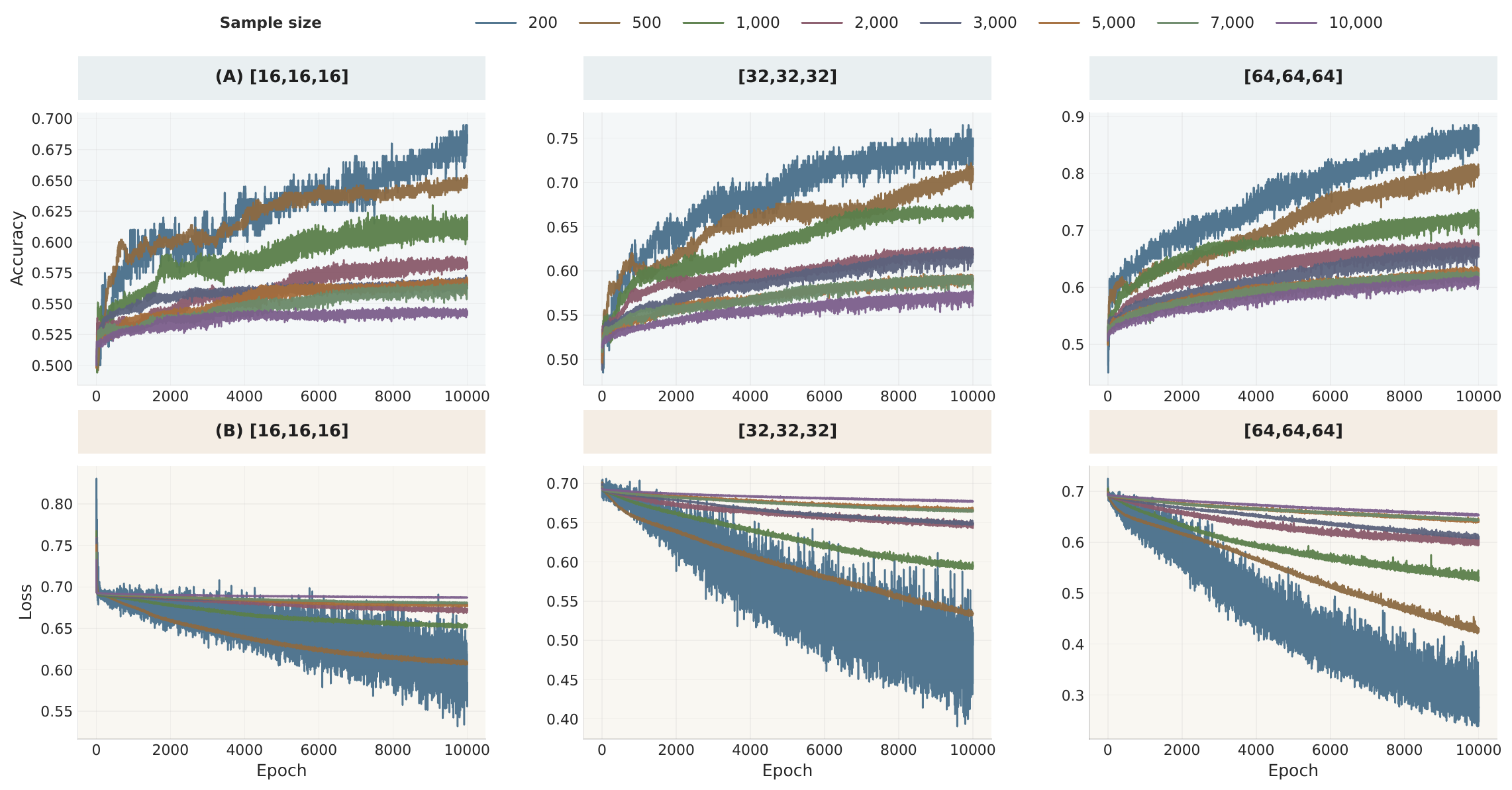}
    \caption{\textbf{Training summaries for the residualized 2D random-label experiments, aggregated over \(5\) seeds.}
    The figure reports epoch vs.\ training accuracy and epoch vs.\ training loss across sample sizes for the residualized architectures considered in this appendix. These summaries complement \Cref{tab:residual_counts} by showing the corresponding optimization trajectories for the same experimental setting.}
    \label{fig:residual_acc_loss_epochs}
\end{figure*}

\begin{table}[t]
\centering
\caption{\textbf{Exact} realized affine-region counts \(\lvert \mathcal{R}_{\Omega}(f)\rvert\) in \(\Omega=[-1,1]^2\) at epoch \(10000\) for residualized ReLU MLPs trained on random 2D inputs with random labels, reported as mean \(\pm\) standard deviation over \(5\) random seeds. All settings match the main experiments; only the architecture differs by adding one residual connection as in \eqref{eq:app_res_skip}.}
\label{tab:residual_counts}
\small
\setlength{\tabcolsep}{4.5pt}
\resizebox{\columnwidth}{!}{%
\begin{tabular}{l|cccccccc}
\toprule
Architecture (with one residual) & 200 & 500 & 1000 & 2000 & 3000 & 5000 & 7000 & 10000 \\
\midrule
$[16,16,16]$ + residual & $2049_{\scriptscriptstyle\pm17}$ & $2176_{\scriptscriptstyle\pm43}$ & $2189_{\scriptscriptstyle\pm8}$  & $1874_{\scriptscriptstyle\pm31}$ & $853_{\scriptscriptstyle\pm24}$  & $512_{\scriptscriptstyle\pm46}$  & $347_{\scriptscriptstyle\pm13}$ & $48_{\scriptscriptstyle\pm7}$  \\
$[32,32,32]$ + residual & $3748_{\scriptscriptstyle\pm29}$ & $5597_{\scriptscriptstyle\pm6}$  & $3793_{\scriptscriptstyle\pm41}$ & $2147_{\scriptscriptstyle\pm18}$ & $2908_{\scriptscriptstyle\pm35}$ & $741_{\scriptscriptstyle\pm9}$   & $911_{\scriptscriptstyle\pm48}$ & $115_{\scriptscriptstyle\pm12}$ \\
$[64,64,64]$ + residual & $7613_{\scriptscriptstyle\pm12}$ & $9050_{\scriptscriptstyle\pm44}$ & $7421_{\scriptscriptstyle\pm27}$ & $6879_{\scriptscriptstyle\pm5}$  & $2683_{\scriptscriptstyle\pm39}$ & $1287_{\scriptscriptstyle\pm16}$ & $594_{\scriptscriptstyle\pm33}$ & $98_{\scriptscriptstyle\pm15}$ \\
\bottomrule
\end{tabular}%
}
\end{table}

\clearpage
\section{Additional Visualizations of Training Dynamics}
\label{app:dynamics}

This appendix provides additional visualizations of the training dynamics discussed in the main paper.
The goal is to complement the quantitative region-count results with qualitative and temporal evidence of how affine partitions and decision boundaries evolve during optimization.
Unless otherwise stated, the experimental protocol follows the setup described in \Cref{subsec:exp_setup}: we use fully-connected ReLU MLPs, train with the same optimizer and loss function, and count affine regions within the bounded domain \(\Omega=[-1,1]^2\).
The results in this appendix are intended as supplementary visual evidence rather than as separate claims beyond the main experiments.

\subsection{Datasets}
\label{app:dynamics:datasets}

We consider two standard two-dimensional synthetic datasets for visualizing the evolution of affine-region partitions and decision boundaries.
The first dataset, \texttt{make\_moons}, provides a binary classification problem with a nonlinearly separable but geometrically structured decision boundary.
The second dataset, \texttt{make\_gaussian\_quantiles}, contains five classes arranged according to Gaussian quantiles, leading to a more multi-class decision structure.
These datasets are useful for visual inspection because the input space is two-dimensional, allowing both affine regions and decision regions to be plotted directly.

\begin{figure}[t]
    \centering
    \makebox[\columnwidth][c]{%
    \begin{subfigure}[t]{0.35\columnwidth}
        \centering
        \includegraphics[width=\linewidth]{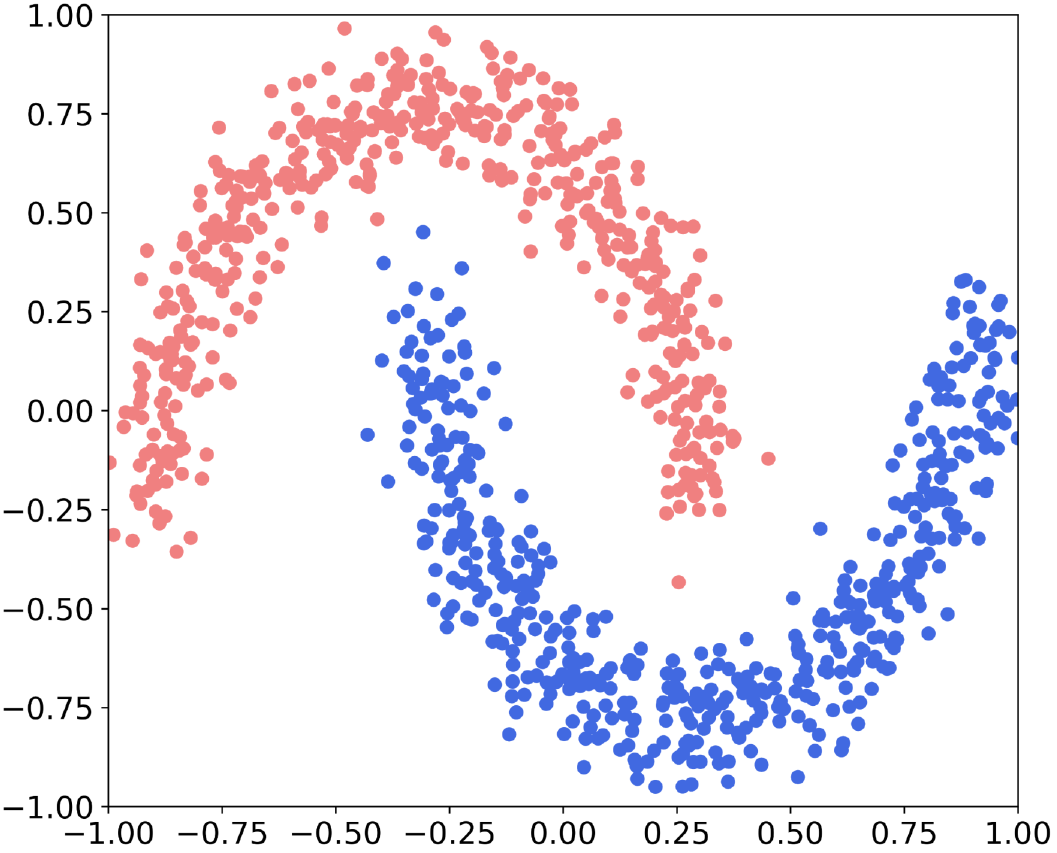}
        \caption{\texttt{make\_moons}.}
        \label{fig:datasets_moons}
    \end{subfigure}
    \hspace{0.06\columnwidth}
    \begin{subfigure}[t]{0.35\columnwidth}
        \centering
        \includegraphics[width=\linewidth]{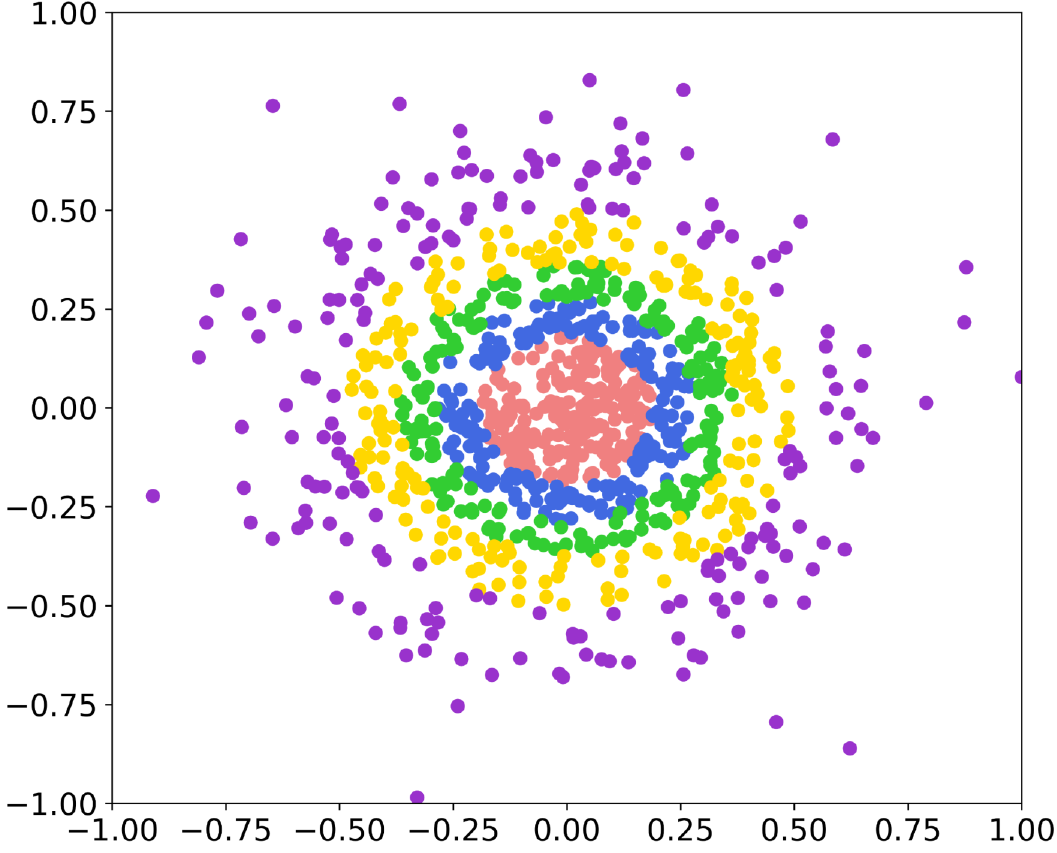}
        \caption{\texttt{make\_gaussian\_quantiles}.}
        \label{fig:datasets_gaussian}
    \end{subfigure}%
    }
    \caption{Two synthetic datasets used for visualizing training dynamics.
    \textbf{Left:} the \texttt{make\_moons} dataset with \(1000\) samples and \(2\) classes.
    \textbf{Right:} the \texttt{make\_gaussian\_quantiles} dataset with \(1000\) samples and \(5\) classes.}
    \label{fig:datasets}
\end{figure}

\clearpage
\subsection{Region-count and accuracy curves during training}
\label{app:dynamics:curves}

We first examine the temporal relationship between affine-region counts and classification accuracy.
For each architecture, we track the number of realized affine regions and the training accuracy across selected epochs.
These curves provide a complementary view of the region-refinement process: during training, the network may increase the number of realized affine regions as it adapts its piecewise-affine partition to the data geometry.
However, the relationship between region count and accuracy is not strictly monotone, since region formation also depends on optimization dynamics and on how the regions are positioned relative to the class boundaries.

\begin{figure}[t]
    \centering
    \begin{subfigure}[t]{0.32\columnwidth}
        \centering
        \includegraphics[width=\linewidth]{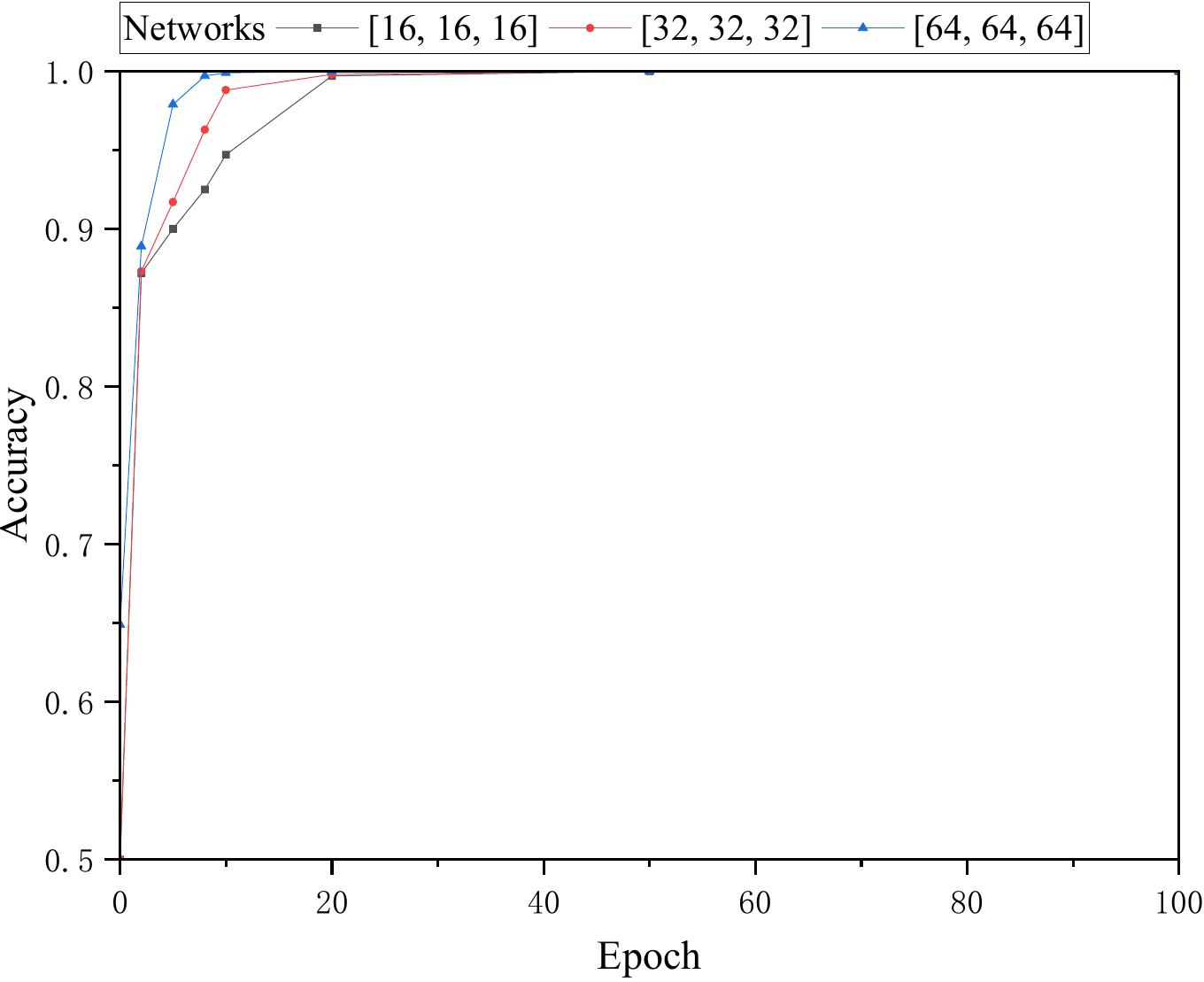}
        \caption{\([16,16,16]\).}
    \end{subfigure}
    \hfill
    \begin{subfigure}[t]{0.32\columnwidth}
        \centering
        \includegraphics[width=\linewidth]{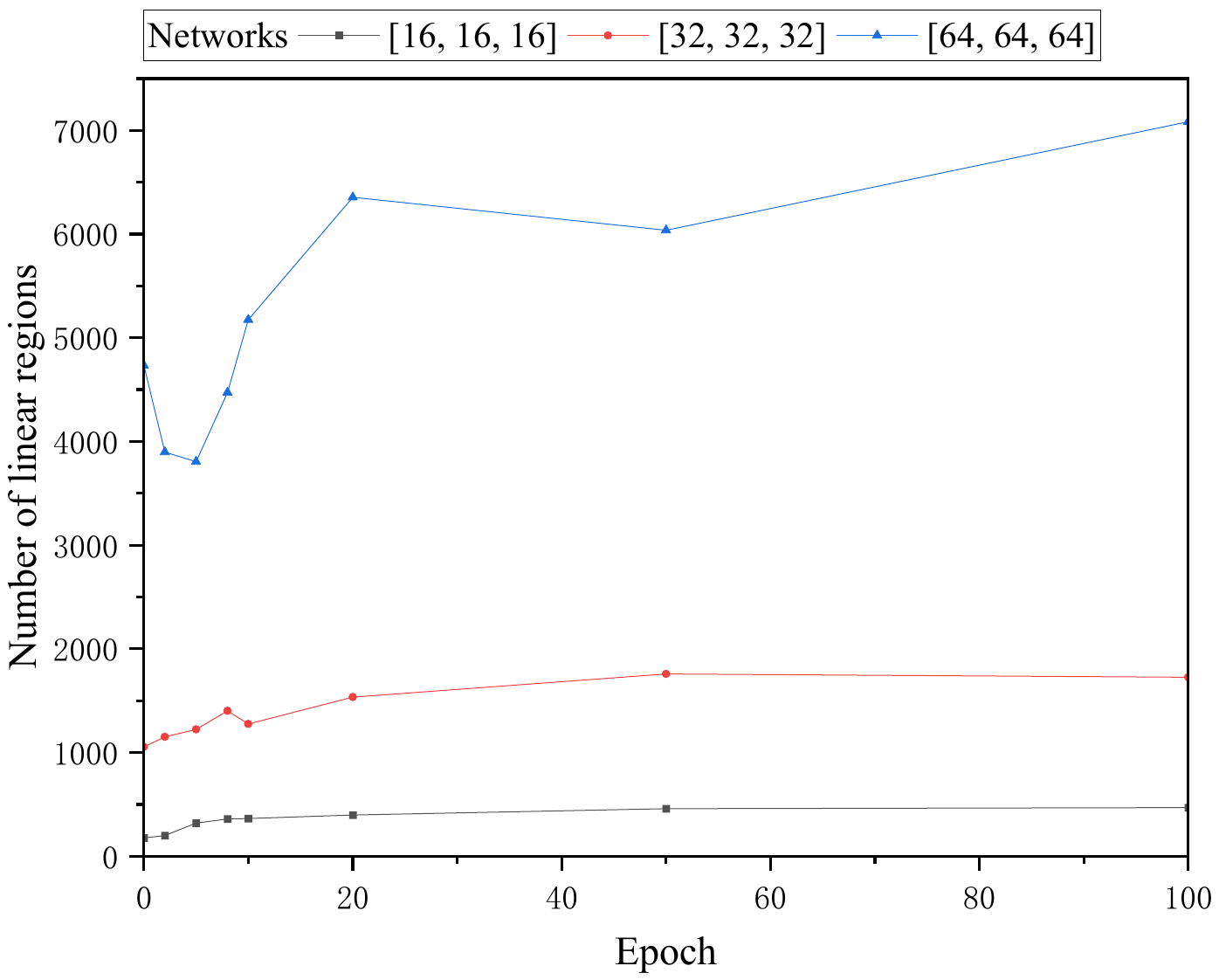}
        \caption{\([32,32,32]\).}
    \end{subfigure}
    \hfill
    \begin{subfigure}[t]{0.32\columnwidth}
        \centering
        \includegraphics[width=\linewidth]{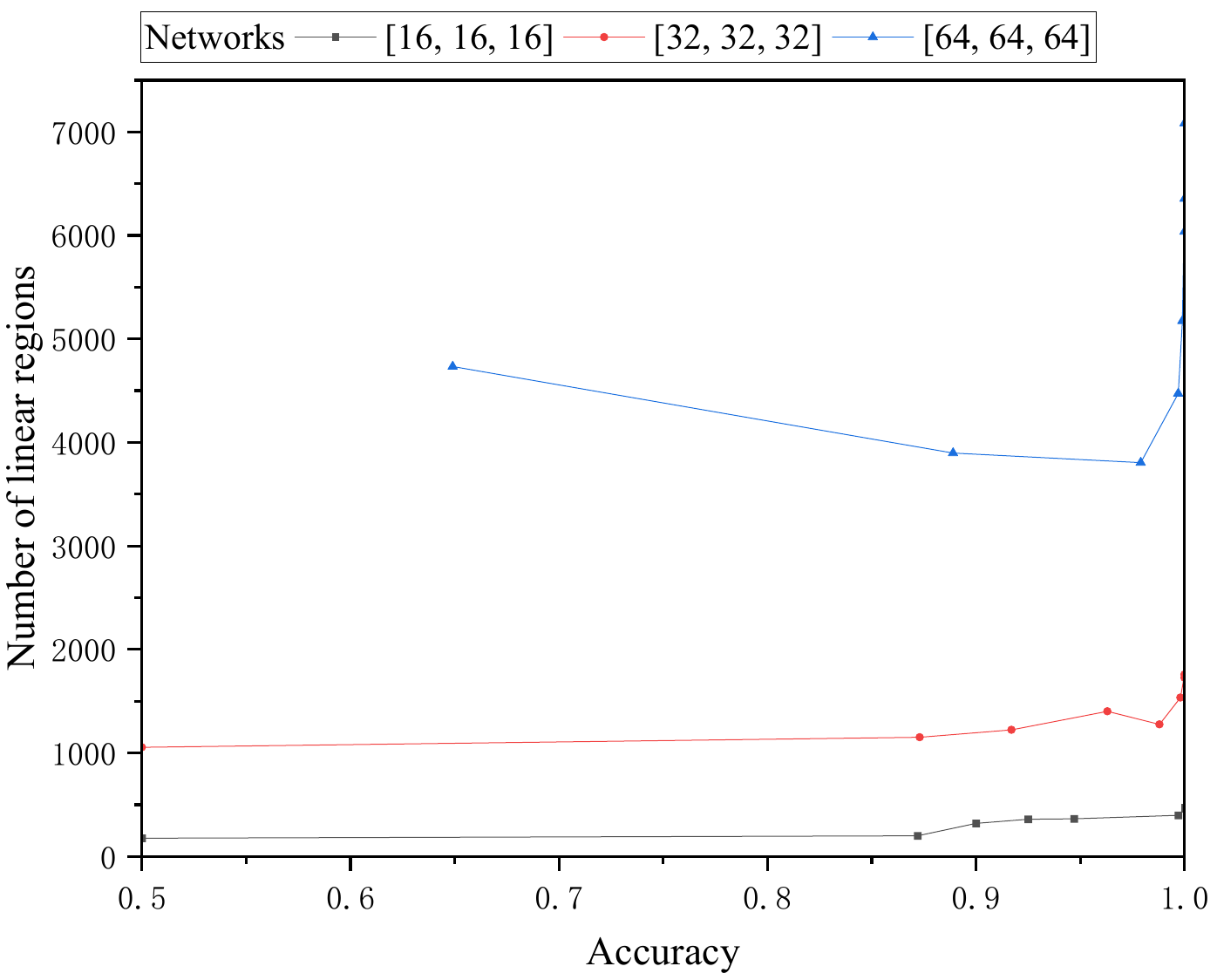}
        \caption{\([64,64,64]\).}
    \end{subfigure}
    \caption{Training dynamics on the \texttt{make\_moons} dataset in \Cref{fig:datasets_moons}.
    Each panel shows the relationship among affine-region count, accuracy, and training epoch for a different network architecture.}
    \label{fig:moons_curves}
\end{figure}

\begin{table}[t]
\centering
\caption{Affine-region counts across training epochs on the \texttt{make\_moons} dataset in \Cref{fig:datasets_moons}.}
\label{tab:moons_counts}
\small
\setlength{\tabcolsep}{3.5mm}
\resizebox{\columnwidth}{!}{%
\begin{tabular}{l|cccccccc}
\toprule
\diagbox[width=8em,trim=l]{DNNs}{Epoch} & 0 & 2 & 5 & 8 & 10 & 20 & 50 & 100 \\
\midrule
\([16,16,16]\) & 206  & 198  & 319  & 358  & 363  & 395  & 456  & 467 \\
\([32,32,32]\) & 1151 & 1151 & 1222 & 1401 & 1274 & 1533 & 1756 & 1725 \\
\([64,64,64]\) & 4112 & 3897 & 3803 & 4472 & 5172 & 6354 & 6037 & 7082 \\
\bottomrule
\end{tabular}%
}
\end{table}

For the \texttt{make\_moons} dataset, the affine-region counts generally increase during training for all three architectures, as summarized in \Cref{tab:moons_counts}.
This trend is consistent with the visual intuition that the networks progressively refine their piecewise-affine partitions in order to represent the curved binary decision boundary.
The larger architectures start with and maintain substantially more regions than the smaller architecture, reflecting their larger architectural capacity.
At the same time, the counts are not perfectly monotone across epochs.

\begin{figure}[t]
    \centering
    \begin{subfigure}[t]{0.32\columnwidth}
        \centering
        \includegraphics[width=\linewidth]{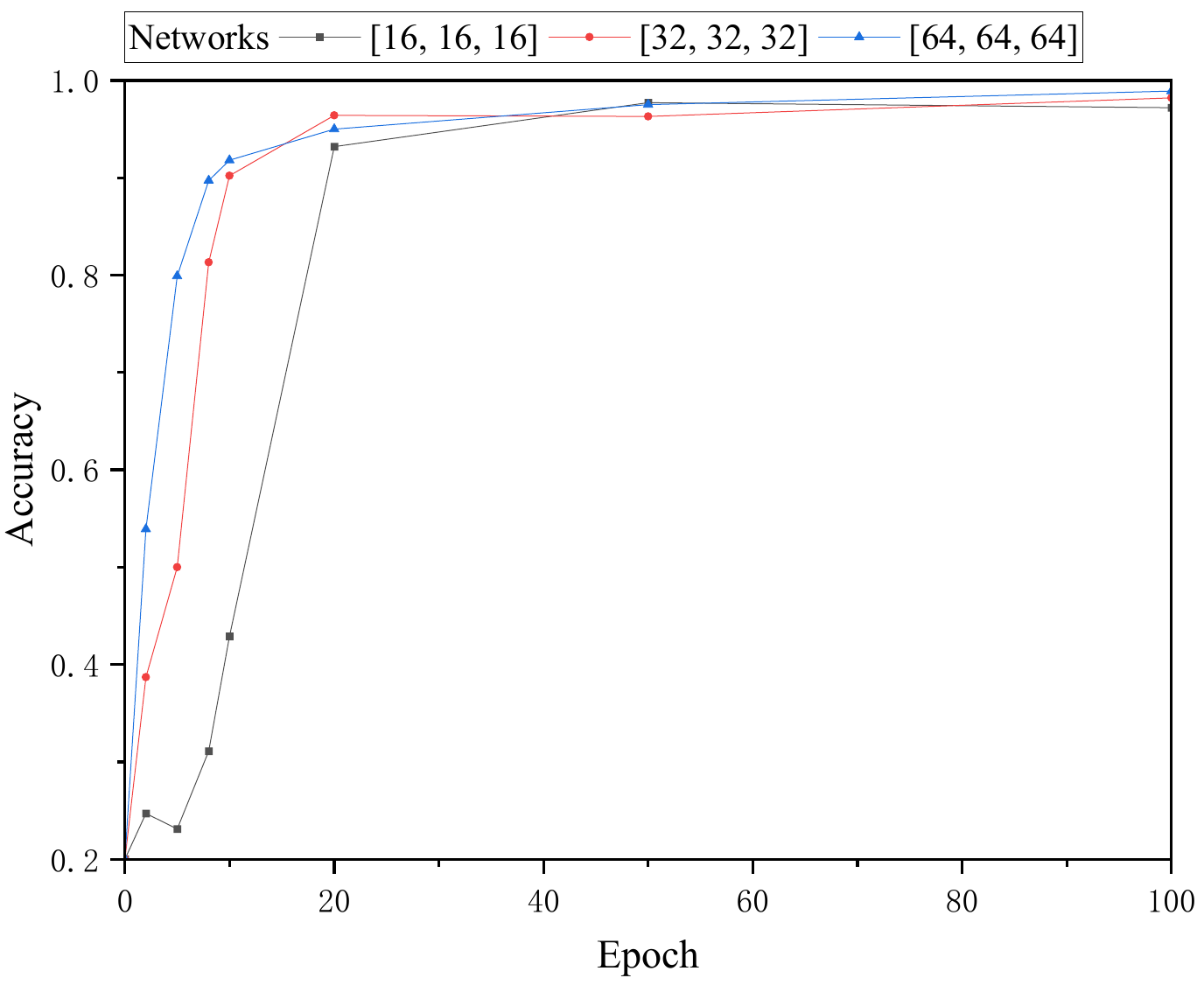}
        \caption{\([16,16,16]\).}
    \end{subfigure}
    \hfill
    \begin{subfigure}[t]{0.32\columnwidth}
        \centering
        \includegraphics[width=\linewidth]{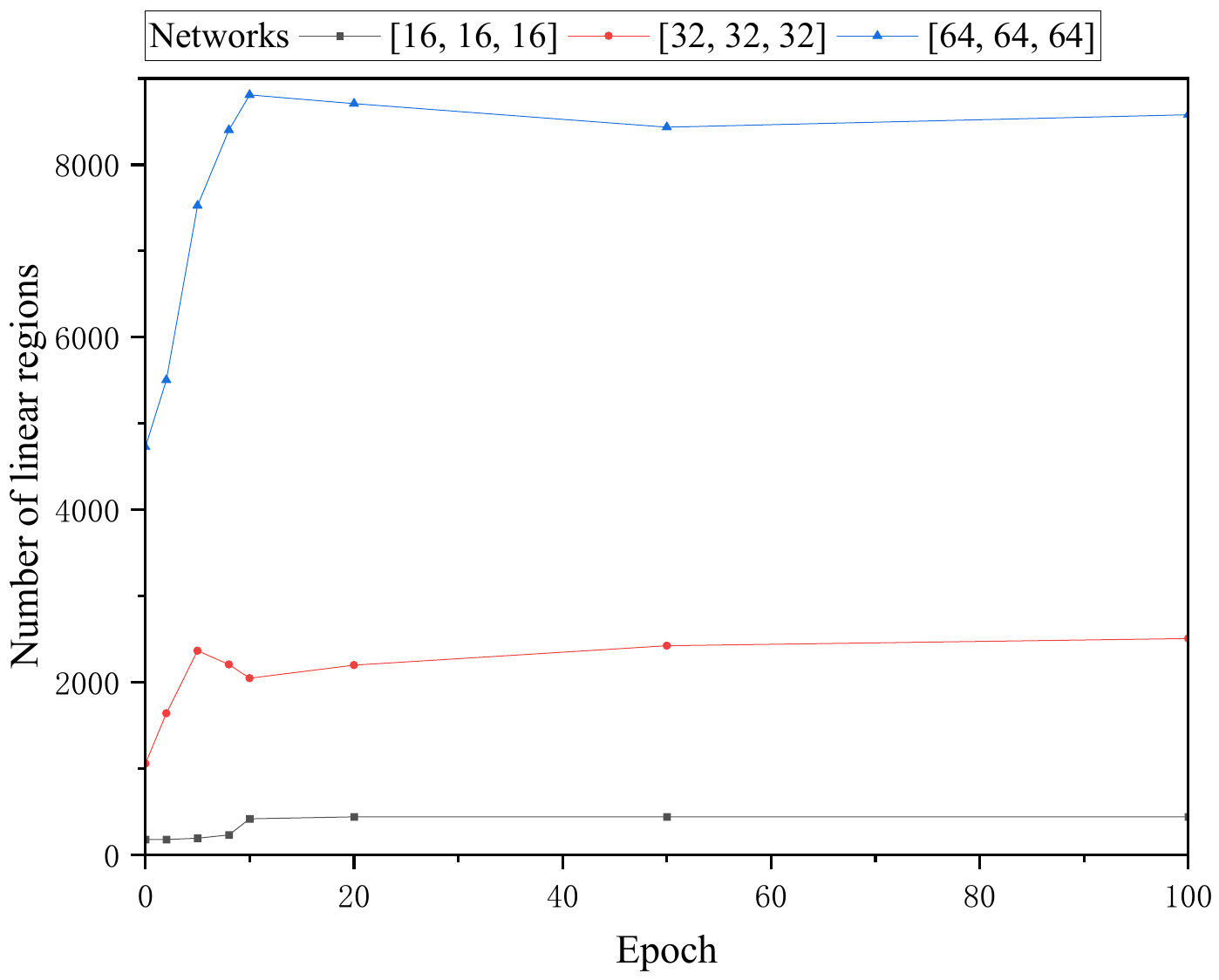}
        \caption{\([32,32,32]\).}
    \end{subfigure}
    \hfill
    \begin{subfigure}[t]{0.32\columnwidth}
        \centering
        \includegraphics[width=\linewidth]{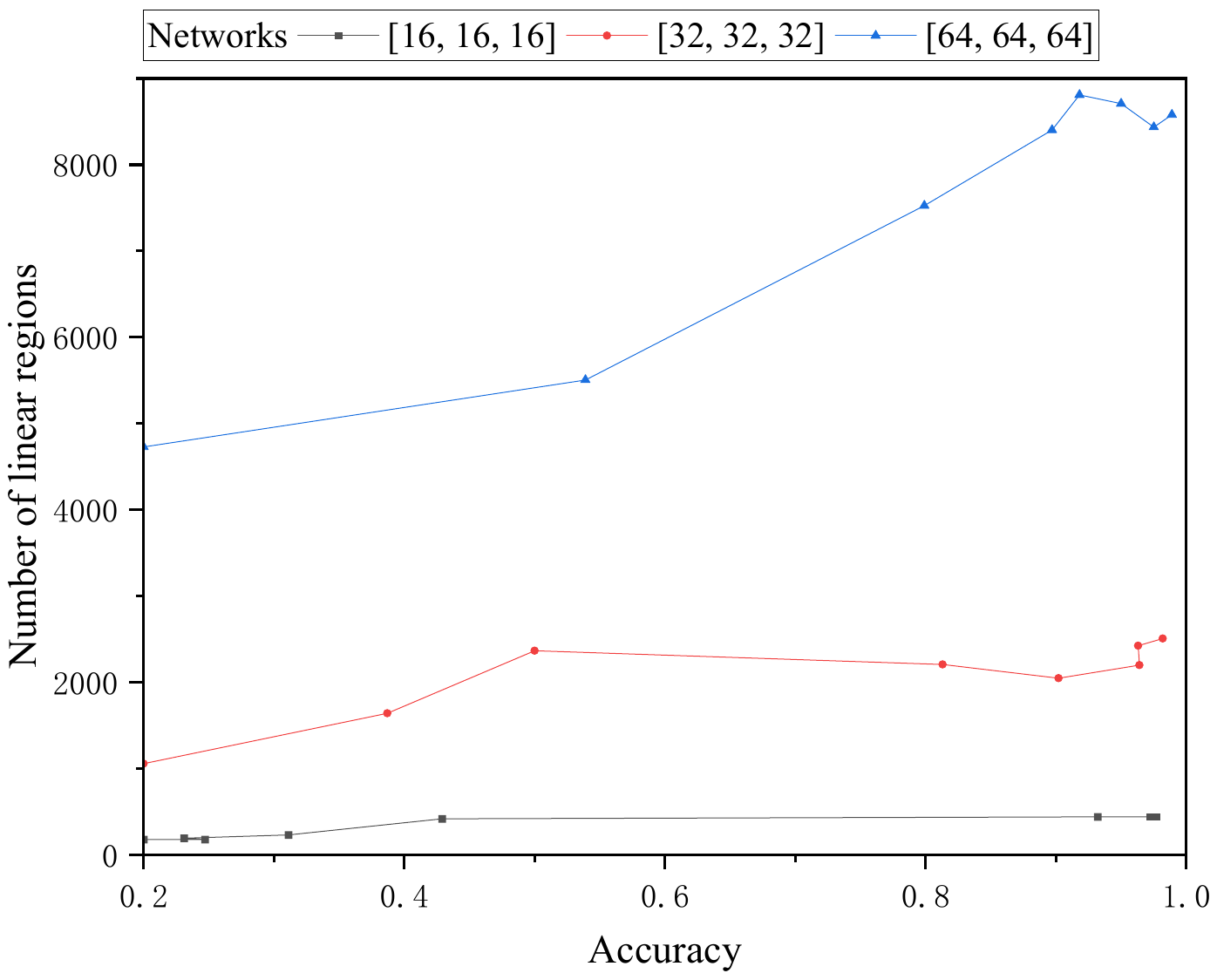}
        \caption{\([64,64,64]\).}
    \end{subfigure}
    \caption{Training dynamics on the \texttt{make\_gaussian\_quantiles} dataset in \Cref{fig:datasets_gaussian}.
    Each panel shows the relationship among affine-region count, accuracy, and training epoch for a different network architecture.}
    \label{fig:gaussian_curves}
\end{figure}

\begin{table}[t]
\centering
\caption{Affine-region counts across training epochs on the \texttt{make\_gaussian\_quantiles} dataset in \Cref{fig:datasets_gaussian}.}
\label{tab:gaussian_counts}
\small
\setlength{\tabcolsep}{3.5mm}
\resizebox{\columnwidth}{!}{%
\begin{tabular}{l|cccccccc}
\toprule
\diagbox[width=8em,trim=l]{DNNs}{Epoch} & 0 & 2 & 5 & 8 & 10 & 20 & 50 & 100 \\
\midrule
\([16,16,16]\) & 207  & 174  & 189  & 228  & 415  & 437  & 438  & 436 \\
\([32,32,32]\) & 1154 & 1638 & 2362 & 2204 & 2045 & 2197 & 2420 & 2505 \\
\([64,64,64]\) & 4113 & 5501 & 7524 & 8402 & 8809 & 8706 & 8433 & 8579 \\
\bottomrule
\end{tabular}%
}
\end{table}

For the \texttt{make\_gaussian\_quantiles} dataset, the region counts also increase substantially during training, especially for the wider networks.
Compared with the binary \texttt{make\_moons} task, this dataset involves a multi-class decision structure, and the learned partitions become more refined as training proceeds.
The increase is particularly pronounced for the \([64,64,64]\) network, whose region count grows from \(4113\) at initialization to \(8579\) at epoch \(100\).
These observations support the interpretation that, on learnable structured datasets, training can actively allocate additional affine regions to fit the geometry of the target decision function.

\clearpage
\subsection{Evolution of decision boundaries}
\label{app:dynamics:decision_boundaries}

We further visualize how the affine partitions and decision boundaries evolve over training.
For each selected epoch, the upper visualization shows the affine-region partition induced by the trained network within the evaluation domain, while the lower visualization shows the corresponding decision regions.
These plots make it possible to inspect not only how many affine regions are present, but also how they are spatially organized relative to the data distribution.
In particular, they illustrate that successful training is not merely associated with having many affine regions; rather, the regions must be arranged in a way that supports an appropriate decision boundary.

\begin{figure}[t]
    \centering
    \begin{subfigure}[t]{\textwidth}
        \centering
        \includegraphics[width=0.98\textwidth]{Fig.8b.pdf}
        \caption{\([32,32,32]\).}
    \end{subfigure}
    \vspace{0.5em}
    \begin{subfigure}[t]{\textwidth}
        \centering
        \includegraphics[width=0.98\textwidth]{Fig.8c.pdf}
        \caption{\([64,64,64]\).}
    \end{subfigure}
    \caption{Evolution of affine-region partitions and decision regions on the \texttt{make\_moons} dataset in \Cref{fig:datasets_moons}.
    The selected epochs are \(0,2,5,8,10,20,50,\) and \(100\).
    For each architecture, the top row shows the affine-region partition and the bottom row shows the decision regions.
    Different colors indicate predicted classes, and the transition bands between colors indicate the learned decision boundaries.}
    \label{fig:appendix_moons_dynamics}
\end{figure}

On the \texttt{make\_moons} dataset, the decision-boundary visualizations show a gradual refinement process.
At early epochs, the decision regions are relatively coarse and do not yet align well with the curved data geometry.
As training progresses, the affine partition becomes more structured, and the decision boundary increasingly follows the two-moon shape.
This provides qualitative evidence that the increase in affine-region usage reported in \Cref{tab:moons_counts} is accompanied by a meaningful reorganization of the decision geometry.

\begin{figure}[t]
    \centering
    \begin{subfigure}[t]{\textwidth}
        \centering
        \includegraphics[width=0.98\textwidth]{Fig.9a.pdf}
        \caption{\([16,16,16]\).}
    \end{subfigure}
    \vspace{0.4em}
    \begin{subfigure}[t]{\textwidth}
        \centering
        \includegraphics[width=0.98\textwidth]{Fig.9b.pdf}
        \caption{\([32,32,32]\).}
    \end{subfigure}
    \vspace{0.4em}
    \begin{subfigure}[t]{\textwidth}
        \centering
        \includegraphics[width=0.98\textwidth]{Fig.9c.pdf}
        \caption{\([64,64,64]\).}
    \end{subfigure}
    \caption{Evolution of affine-region partitions and decision regions on the \texttt{make\_gaussian\_quantiles} dataset in \Cref{fig:datasets_gaussian}.
    The selected epochs are \(0,2,5,8,10,20,50,\) and \(100\).
    For each architecture, the top row shows the affine-region partition and the bottom row shows the decision regions.
    Different colors indicate predicted classes, and the transition bands between colors indicate the learned decision boundaries.}
    \label{fig:appendix_gaussian_dynamics}
\end{figure}

On the \texttt{make\_gaussian\_quantiles} dataset, the learned decision regions become increasingly structured over training and reflect the multi-class organization of the data.
The wider networks produce finer affine partitions and more detailed class boundaries, consistent with the larger region counts in \Cref{tab:gaussian_counts}.
Together with the \texttt{make\_moons} results, these visualizations support the view that, in structured and learnable settings, optimization can refine both the affine-region partition and the corresponding decision boundary over time.

\end{document}

%% file: math_commands.tex
\usepackage{amsmath,amsfonts,bm}

\def\eqref#1{equation~\ref{#1}}

\def\1{\bm{1}}

\DeclareMathAlphabet{\mathsfit}{\encodingdefault}{\sfdefault}{m}{sl}
\SetMathAlphabet{\mathsfit}{bold}{\encodingdefault}{\sfdefault}{bx}{n}

\newcommand{\E}{\mathbb{E}}

\newcommand{\R}{\mathbb{R}}

\newcommand{\lr}{\alpha}

